\newtheorem{theorem}{Theorem}
\newtheorem{lemma}{Lemma}
\newtheorem{definition}{Definition}
\newtheorem{game}{Game}
\newtheorem{assumption}{Assumption}
\newtheorem{proposition}{Proposition}
\newtheorem{question}{Question}
\def\N{\mathcal{N}}
\def\x{\boldsymbol{x}}
\def\D{\mathcal{D}}
\def\xb{\boldsymbol{x}}
\def\xbswm{\xb^{\textsc{swm}}}
\def\xswm{x^{\textsc{swm}}}
\def\xbne{\xb^{\textsc{ne}}}
\def\xne{x^{\textsc{ne}}}
\def\xz{x^{\circ \textsc{swm}}}
\def\xzne{x^{\circ \textsc{ne}}}
\def\hne{h^{\textsc{ne}}}
\def\hswm{h^{\textsc{swm}}}
\newcommand{\revw}[1]{\textcolor{red}{#1}}
\newcommand{\com}[1]{\textbf{\color{red} \left(Comment: #1\right) }}
\newcommand{\comg}[1]{\textbf{\color{blue} \left(COMMENT: #1\right)}}
\newcommand{\response}[1]{\textbf{\color{blue} \left(RESPONSE: #1\right)}}
\newcommand{\revw}[1]{#1}
\newcommand{\com}[1]{}
\newcommand{\comg}[1]{}
\newcommand{\response}[1]{}
\IEEEoverridecommandlockouts\IEEEpubid{\makebox[\columnwidth]{ 979-8-3503-1090-0/23/\$31.00~\copyright~2023 IEEE \hfill} \hspace{\columnsep}\makebox[\columnwidth]{ }}
\begin{document}

\title{Price of Stability in Quality-Aware Federated 
Learning\vspace{-2mm}}

\author{\IEEEauthorblockN{Yizhou Yan$^{\dag}$$^*$, Xinyu Tang$^{\dag}$$^*$, Chao Huang$^{\ddag}$, and Ming Tang$^{\dag}$}
\IEEEauthorblockA{$^{\dag}$Department of Computer Science and Engineering, Southern University of Science and Technology \\
$^{\ddag}$Department of Computer Science,
The University of California, Davis
%e-mail: tangm3@sustech.edu.cn, ...
\vspace{-2mm}}
\thanks{This work was supported in part by Guangdong Basic and Applied Basic Research Foundation under Grant 2023A1515012819 and the National Natural Science Foundation of China under Grant 62202214. (Corresponding author: Chao Huang) $^*$Both authors contributed equally to this research.}
}

\maketitle

\begin{abstract}
Federated Learning (FL) is a distributed machine learning scheme that enables clients to train a shared global model without exchanging local data. The presence of label noise can severely degrade the FL performance, and  some existing studies have focused on algorithm design for label denoising. However, they ignored the important issue that clients may not apply costly label denoising strategies due to them being self-interested and having heterogeneous valuations on the FL performance. To fill this gap, we model the clients' interactions as a novel label denoising game and characterize its equilibrium. %This framework provides insights into the stable states of the system and the factors affecting these equilibria.
We also analyze the price of stability, which quantifies the difference in the system performance (e.g., global model accuracy, social welfare) between the equilibrium outcome and the socially optimal solution. We prove that the equilibrium outcome always leads to a lower global model accuracy than the socially optimal solution does. We further design an efficient algorithm to compute the socially optimal solution. 
%we present a first game-theoretical study on 
%the price of stability (PoS) in federated learning with label noise. 
%A key challenge is that it is unclear how label noise affects FL. To this end, we consider two types of label noise (random flipping and instance-dependent) 
% We formulate the interactions among clients as a novel non-cooperative game, where each client decides its label denoising strategy to maximize its own payoff. We present closed-form characterizations to the game equilibrium, and further solve the socially optimal solution to compute PoS. 
Numerical experiments on MNIST dataset show that the price of stability increases as the clients' data become noisier, calling for an effective incentive mechanism. 
%considering two types of label noise: random flipping and instance-dependent. Our results reveal a few interesting insights. First, random flipping is more harmful to the global model performance than instance-dependent noise.  Second, the price of stability increases as the clients' data become noisier, calling for an effective incentive mechanism. 
%We hope that our work facilitates understanding the extent to which the system's performance degrades due to non-cooperative behavior and helps identify opportunities to improve the overall efficiency in practical FL systems. 
%Through extensive simulations, we demonstrate that denoising strategies can significantly improve the performance of FL models in the presence of label noise. We also show that the price of stability varies depending on the clients' data distributions, noise levels, and denoising capabilities. Our findings suggest that it is crucial to consider the interplay between label noise and denoising strategies in FL to achieve optimal performance. This work paves the way for designing more robust and efficient quality-aware federated learning algorithms that can adapt to varying noise levels and client capabilities.
\end{abstract}

\section{Introduction}
Federated learning (FL) is an emerging machine learning paradigm that enables distributed clients (e.g., devices,  organizations) to collaboratively train a global model while keeping their data local \cite{kairouz2021advances}. This approach offers several advantages over traditional centralized machine learning, such as improved data privacy and the ability to leverage distributed computational resources. Thus, FL is a promising candidate approach to improving the performance of networking systems. This can be achieved by letting network entities (e.g., devices, operators) act as clients and cooperatively learn system characteristics (e.g., user behaviors, network environment) for prediction and adaptive control.

An FL process typically follows an iterative procedure that consists of several rounds of communication between the clients and a trusted central server. In each round, the server sends the current global model to the clients. These clients then use their local data to compute model updates, e.g., gradients or parameter changes, and send these updates back to the server. The server updates the global model by aggregating the received updates (e.g., weighted averaging). This process repeats until the global model converges \cite{ke2022quantifying}.
%By ensuring that raw data never leaves the local devices, federated learning provides a scalable and privacy-preserving solution to distributed machine learning, making it an attractive choice for numerous applications across various domains.

%An important aspect in federated learning lies in effectively addressing the data heterogeneity challenges, which pertain to the differences in data distributions across these clients. Data heterogeneity plays a critical role in determining the performance and efficiency of the federated learning process. A wealth of studies has focused on understanding and mitigating the effects of data heterogeneity in federated learning, as it has a direct impact on model convergence and generalization capabilities. Researchers have explored various strategies, such as adaptive learning rates, personalized models, and advanced aggregation methods, to tackle the challenges posed by data heterogeneity. By addressing these factors, the research community aims to improve the overall efficacy of federated learning, enabling its adoption in a wide range of practical applica

The performance of the global model crucially depend on data heterogeneity and data quality among clients: %, which we discuss as follows. 
\begin{itemize}
\item \textit{Data heterogeneity}: It pertains to the differences in data distribution among clients. It has been shown to severely compromise the FL convergence and generalization.  Existing studies have concentrated on addressing data heterogeneity by using techniques such as regularization and personalization \cite{tan2022towards}. 
\item \textit{Data quality}: It refers to the conditions of the training data such as the noise rate of labels. Label noise can arise from various sources, e.g.,  mislabeling, human errors during data collection and annotation. Take user mobility traces as an example. Indoor mobile devices connecting to WiFi tend to have better localization accuracy than those devices without WiFi connection do. This leads to a training dataset with higher quality
(i.e., fewer noisy/wrong labels). Label noise is a common challenge in real-world data \cite{fang2022robust}, but it is under-explored in FL and hence the focus of our paper. 
\end{itemize} 
%By investigating the effects of label noise on federated learning and proposing effective strategies to handle such data quality issues, we aim to develop a more comprehensive understanding of the factors influencing federated learning performance and bridge the gap between data heterogeneity and data quality research.

Some recent studies have focused on the label noise detection and label denoising (i.e., correct the wrong labels) \cite{fang2022robust,xu2022fedcorr, zeng2022clc}. For example, Xu \textit{et. al} in \cite{xu2022fedcorr} proposed FedCorr, a multi-stage framework that dynamically identifies and corrects incorrect labels based on per-sample losses. Zeng \textit{at. al} in \cite{zeng2022clc} proposed a consensus-based approach to detect label noise. % and used a re-weighting method to improve FL performance.   
These recent advances have presented FL clients with the opportunity to provide higher quality data, subsequently improving model performance. However, it is important to recognize that in practice, clients (e.g., devices, companies) are self-interested and heterogeneous in terms of their valuations on  model performance \cite{tang2021incentive}. Meanwhile, denoising is a resource-intensive process, often requiring significant  computing power and/or manpower \cite{xu2022fedcorr}. Given these constraints, it is imperative for clients to optimize their denoising strategies, striking a balance between the expected improvement in model performance and the associated denoising costs. This motivates our first key question:
\begin{question}
How will heterogeneous self-interested clients decide their label denoising strategies?
\end{question}
To answer \textit{Question 1}, we model the interactions among heterogeneous clients as a non-cooperative game, in which each client decides the denoising strategy to maximize a tradeoff between the global model performance and the denoising cost. Solving (and even formulating) the game is challenging, as it is unclear how label noise affects FL. To this end, we consider two types of label noise below \cite{gu2021instancedependent}.
\begin{itemize}
\item \textit{Random flipping}: It refers to a case where a label is randomly flipped to another class. Random flipping can model where the annotators have limited expertise or when the annotation process is subject to random errors.
\item \textit{Instance-dependent noise}: Unlike random flipping, instance-dependent noises are not uniformly distributed across the dataset. Instead, the probability of mislabeling depends on the characteristics of the instances themselves. This type of noise can model scenarios where some instances are inherently more difficult to annotate correctly or are more prone to ambiguity.
\end{itemize}
We  conduct experiments to evaluate how these two noise types affect the global model performance, which helps with the game formulation. We characterize the game equilibrium and propose an efficient algorithm to compute it. 

It is known that in many situations, the game equilibrium can be inefficient, leading to suboptimal outcomes for the entire system. This inefficiency arises because clients aim to maximize their individual payoffs, often without considering the collective welfare of the system \cite{donahue2021optimality}. This motivates the study of the price of stability (PoS), which measures the efficiency loss resulting from non-cooperative behavior compared to a socially optimal solution. 
\begin{question}
What is the PoS in quality-aware FL?
\end{question}
Answering \textit{Question 2} is not easy either in the context of quality-aware FL. Specifically, due to the coupling of the clients' denoising decisions and their multi-dimensional heterogeneity (i.e.,  valuations and data quality/noise rate), it is difficult to determine the equilibrium outcome and socially optimal solution in closed-from. Nonetheless, we prove the performance degradation of the equilibrium outcome in terms of the global model accuracy. Moreover, we propose an efficient algorithm to compute the socially optimal solution and empirically evaluate the PoS.
%manage to characterize theoretical bounds of PoS and propose an efficient algorithm to compute the socially optimal solution.}
%The results allow us to understand the extent to which the system's performance degrades due to non-cooperative behavior and helps identify opportunities to improve the overall efficiency.
%\subsection{Key Contributions}

The key contributions of this paper are summarized below.

\begin{itemize}
\item To our best knowledge, we present the first work that studies the PoS in quality-aware FL.  Our study facilitates understanding the extent to which the system's performance degrades due to non-cooperative behavior and helps identify opportunities to improve the overall efficiency in practical FL systems.
\item We derive the socially optimal solution and equilibrium outcome of quality-aware FL and prove their uniqueness. We prove that the equilibrium outcome always leads to a higher average noise rate and a lower global model accuracy than the socially optimal solution does. We further design an efficient algorithm to obtain the socially optimal solution for empirical PoS evaluation.  % performance degradation of the equilibrium outcome.
\item We conduct case studies to investigate two types of label noise: random flipping and instance-dependent noise. 
%For each type, we first characterize its impact on global model accuracy, based on which we empirically evaluate the PoS. 
Numerical experiments show that random flipping is more harmful than instance-dependent noise to the global model accuracy.  Moreover, we find that PoS increases as the clients' data become noisier, calling for an effective incentive mechanism.  

%and then formulate the non-cooperative game among clients and the associated social welfare maximization. We further provide closed-form solutions to the problems and derive efficient algorithms to compute them. 
%\item We conduct numerical experiments using MNIST and find that the results are consistent with our analysis. We further discover some interesting insights. First, XXX. Second, XXX.
\end{itemize}

%\subsection{Related Work}
%\textbf{Label noise in FL}
%\textbf{Incentives in FL}

\section{System Model}

We first present the quality-aware FL model and characterize the clients' payoff. Then, we formulate the social welfare maximization problem and the non-cooperative game.

\subsection{Quality-Aware FL Model}
We consider $N$ clients, denoted by $\N=\{1,2,\ldots, N\}$. Each client $n\in\N$ has a local dataset $\D_n$ with $d_n$ data samples. % These clients aim to cooperatively train a global model with their local dataset. 
We focus on supervised learning tasks in FL, where each data sample consists of a set of features and a label. These clients cooperate in training a global model that maps from feature to label with their local datasets. We consider a scenario where the utility of each client relies on the accuracy of the global model, e.g.,  network operators can use the global model for product improvement to earn revenues.

There may exist label noise/errors in clients' datasets, i.e., the labels of some data samples may be incorrect. Let $\epsilon_n$ denote the noise rate of client $n$'s dataset, i.e., the fraction of client $n$'s data samples that have incorrect labels. Suppose clients are able to rectify incorrect labels with some costs \cite{xu2022fedcorr}. Let decision variable $x_n\in [0, \epsilon_n]$  denote the noise rate of client $n$'s dataset after label correction, and $\xb\triangleq \left(x_n, n\in\N\right)$. Then, the average noise rate after correction among all clients is given by 
\begin{equation}
    \bar{x}(\x) = \frac{\sum_{n\in\N}d_n x_n}{\sum_{n'\in\N}d_{n'}}.
\end{equation}
% that is, the number of data samples (of all clients) with incorrect labels divided by the total number of data samples.
After the FL process, the global model has an accuracy of 
\begin{equation}\label{eq:acc}
    A\left(\xb\right) = g \left( \bar{x}(\x)\right),
\end{equation}
where $g(\cdot)$ is a concave and decreasing function. Such assumptions on $g(\cdot)$ are supported by the experimental results under random flipping and instance-dependent noise in Section \ref{sec:case}. Intuitively, the average noise rate (rather than the noise rate of any arbitrary client) affects the accuracy of the global model. When the average noise rate is higher, the accuracy  is lower. Meanwhile, the marginal decrease of the accuracy is increasing in the average noise rate.  % \revw{Move the experimental results here ...}

%the assumptions that (i) $g(\bar{x}(\x))$ is a function of $\bar{x}(\x)$ rather than $x_n$ ({\color{red}can cause confusion since $g$ is a function of $x_n$. }), and (ii) $g(\cdot)$ is a concave and decreasing function.

\begin{comment}\begin{assumption}[Model Accuracy] Given $\x$, the accuracy of the global model after FL process is given by
\begin{equation}
    A\left(\xb\right) = g \left( \frac{\sum_{n\in\N}d_nx_n}{\sum_{n\in\N}d_n}\right),
\end{equation}
where 
\end{assumption}
\revw{}
\end{comment}

From a social perspective, correcting labels can improve the global model accuracy. %, which improves the utility of each client. 
From a client's perspective, it incurs additional costs. Thus, the clients' individual interests may conflict with the goal of social welfare maximization. % Next, we define the clients' payoff functions and formulate the social welfare maximization problem and the non-cooperative game. 

%On the one hand, organizations aim to obtain a high-accuracy global model, which requires that the error rate of the data samples of the organizations used for training  should be low. On the other hand, correcting labels in datasets incurs costs to organizations.

\subsection{Payoff of the Clients}
Given decision variable $\x$ and accuracy $A\left(\xb\right)$, the utility of client $n\in\N$ is given by
\begin{equation}
    U_n\left(A\left(\xb\right)\right) = w_n A\left(\xb\right),
\end{equation}
where $w_n>0$ is the unit revenue of client $n$. For example, when clients are network operators, $w_n$ is the revenue that the operator can earn per accuracy improvement \cite{tang2021incentive, zhang2022enabling}. 

Given  decision $x_n$, client $n$ has a cost for label correction: 
\begin{equation}\label{cost-function}
    C_n\left(x_n\right) = d_n\left(f\left(x_n\right) - f\left(\epsilon_n\right)\right).
\end{equation}
We assume that $f(\cdot)$ is a strictly convex and decreasing function. Intuitively, the label correction cost is  proportional to the size of the dataset $d_n$. Meanwhile, as the noise rate of the dataset decreases, the cost of reducing the same noise rate of the dataset increases. If the dataset has not been corrected (i.e., $x_n=\epsilon_n$), the cost is zero. We show a specific formulation of $f(\cdot)$ with practical insights in Section \ref{sec:case}.

The payoff function of a client $n\in\N$ is defined as the difference between its utility and cost:
\begin{equation}
    P_n\left(\xb\right) = U_n\left(A\left(\xb\right)\right) -  C_n\left(x_n\right).
\end{equation}

\begin{comment}
When considering the cost function of correcting dataset, it should follow the three rules: 
\begin{itemize}
    \item the cost is directly  proportional to the size of the dataset;
    \item as the error rate of the dataset decreases, the cost of reducing the same error rate of the dataset increases;
    \item if the dataset is not corrected, the cost is $0$.
\end{itemize}

So the cost of organization $n$ correcting dataset is denoted by

where $f$ is strictly convex and monotonically decreasing.

\end{comment}

% \subsubsection{Cost}

% \subsubsection{Utility}

% \subsubsection{Payoff}

% Such a payoff could lead to free-rider problem. This is because the global model is essentially a public goods which is non-rivalrous and non-excludable. As a result, an organization can have a higher payoff by relying on the error correction behaviors of other organizations. That is, if other organizations reduces the loss of the global model by error correction, then this organization can experience an increased utility without any additional cost. 

\subsection{Social Welfare Maximization and Game Formulation}
From a social perspective, clients should optimize the average noise rate after correction to maximize the social welfare. The socially optimal solution $\xbswm$  is given by
\begin{subequations}\label{eq:swm}
    \begin{align}
       \xbswm \triangleq\arg ~&  \underset{\xb}{\text{maximize}}
        & &  \textstyle \text{SW}(\xb)\triangleq \sum_{n\in\N}P_n\left(\xb\right) \label{eq:sw}\\
        & \text{subject to}
        & & x_n \in [0, \epsilon_n], ~ n \in \N.
    \end{align}
\end{subequations}

\begin{comment}
\begin{definition}
    A social welfare maximization is a $\xbswm$ that satisfies
    \begin{equation}
        \sum_{i \in \N}P_i\left(\xbswm\right) >= \sum_{i \in \N}P_i\left(\xb\right), x_n \in [0, \epsilon_n], n \in \N
    \end{equation}
\end{definition}
\end{comment}

However, clients are usually self-interested. The interaction among clients is modeled as a non-cooperative game.
\begin{game}[Label Denoising Game]\label{game:error}~~

\begin{itemize}
    \item Players: clients $n \in \N$;
    \item Strategy: $x_n \in [0, \epsilon_n]$ for each client $n \in \N$;
    \item Payoff function: $P_n\left(\xb\right)$ for  each client $n \in \N$.
\end{itemize}

\end{game}
The stable state of the quality-aware FL system with self-interested clients corresponds to the situation under Nash equilibrium (NE) $\xbne$ of Game \ref{game:error}, i.e., the strategy profile such that no client has the incentive to deviate from. Let $\xb_{-n}\triangleq (x_{i}, i\in\N \setminus \{n\})$ denote the decisions of all clients except client $n\in\N$. For presentation simplicity, we use notations $P_n(\xb)$ and $P_{n}(x_n,\xb_{-n})$ interchangeably.
\begin{definition}[NE]
An NE  of Game \ref{game:error} is an $\xbne$ satisfying
\begin{equation}
    P_n\left(\xne_n, \xbne_{-n}\right) \geq P_n\left(x_n, \xbne_{-n}\right), ~ x_n \in [0, \epsilon_n], n \in \N.
\end{equation}
% where $\xbne_{-n}\triangleq \left(\xne_{i}, i\in\N \setminus \{n\}\right)$.
\end{definition}
% That is, given the choices of other organizations, organization $n \in \N$ can't increase its payoff by deviating from $\xne_n$.

% Next, we will explore the properties of x obtained by organizations that respectively maximize social welfare and maximize their own benefits (i.e. Nash equilibrium), and compare them.

We aim to analyze the label denoising strategies of  clients and determine the PoS of quality-aware FL systems. % system. % We analyze the difference  between equilibrium outcome and socially optimal solution in terms of noise rate, model accuracy, and social welfare. 
%It is nontrivial to derive such analytical results because the equilibrium and social optimum cannot be determined in closed-form.

% analyze the difference between solutions $\xbswm$ and $\xbne$ as well as the associated social welfare difference. The reduction of social welfare due to the non-cooperative organizations corresponds to the price of stability and provides insights for incentivizing cooperation among organizations with label noise. 

\section{Theoretical Analysis}\label{sec:analysis}
In this section, we first derive the socially optimal solution and the associated necessary condition, which provides practical insights and algorithm design guidance. Then, we derive the NE of Game \ref{game:error}. Finally, we characterize the PoS in model accuracy, which is challenging to analyze due to the coupled decision variables and client heterogeneity.

% Despite the coupling of decision variables, we characterize the main features of the optimal solution to problem \eqref{eq:swm} and NE, and 
    
\subsection{Socially Optimal Solution}
% \revw{update till here ... }
% We analyze the properties of maximizing social welfare. Firstly, we present the optimal solution to the social welfare maximization. Then, we provide a sufficient condition for maximizing social welfare, which can be used to design an algorithm for solving the maximization of social welfare.
Since problem \eqref{eq:swm} is a convex programming problem, we determine its optimal solution by checking the KKT conditions. Specifically, let $\hswm_n (x_n)\triangleq {\partial (\sum_{i\in\N}P_i\left(\xb\right))}/{\partial x_n}$ denote the partial derivative of the social welfare with respect to $x_n$. Let $\bar{w}\triangleq \sum_{i\in\N}w_{i}/N$ and $\bar{d}\triangleq \sum_{i\in\N}d_{i}/N$. Thus,
\begin{equation}\label{eq:fod}
        \hswm_n\left(x_n\right) = \Big(\frac{\bar{w}}{\bar{d}} \frac{\partial g\left(\bar{x}(\x)\right)}{\partial \bar{x}(\x)} - \frac{\partial f(x_n)}{\partial x_n}\Big)d_n,
\end{equation}
which is strictly decreasing in $x_n$ due to the strict convexity of $f(\cdot)$ and the concavity of $g(\cdot)$. We use $\xz_n$ to denote the unique zero point of $\hswm_n(x_n)$. The optimal solution to problem \eqref{eq:swm} is given  as follows.
\begin{lemma}[Optimal Solution]\label{lem:opt} The optimal solution $\xswm_n$ for $n\in\N$ to problem \eqref{eq:swm} is given by 
    \begin{equation}
\xswm_n = \left\{\begin{array}{ll}
    \epsilon_n, & \rm{if } \; \text{$\hswm_n\left(\epsilon_n\right) > 0$},  \\
    0, & \rm{if }\;  \text{$\hswm_n\left(0\right) < 0$}, \\
    \xz_n, & \rm{otherwise}.
    \end{array}\right.
\end{equation}
%As either the total unit revenue $\sum_{n\in\N}w_n$ increases or the total number of data samples $\sum_{n\in\N}d_n$ decreases, the optimal $\xswm_n$ decreases. The optimal $\xswm_n$ is independent of the number of data samples of organization $n$, i.e., $d_n$.
Further, $\xswm_n$ decreases in $\bar{w}$ and increases in $\bar{d}$. Given a fixed value of $\bar{d}$, $\xswm_n$ is independent of $d_n$. %and is independent of $d_n$ % ({\color{red} how come is this independent of $d_n$ since it is depedent on the summation which incorporates $d_n$}).
\end{lemma}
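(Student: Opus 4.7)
The plan is to apply the KKT optimality conditions to the convex program \eqref{eq:swm} and then read off comparative statics from the functional form of $\hswm_n$ in \eqref{eq:fod}. First I would confirm convexity: the map $\xb\mapsto\bar{x}(\xb)$ is linear, so $g(\bar{x}(\xb))$ is concave by composition with a concave decreasing $g$; each $-C_n(x_n)$ is concave since $f$ is convex; and the feasible set is a box. Hence any KKT point is a global maximizer, and such a point is characterized entirely by the sign of $\hswm_n$ on $[0,\epsilon_n]$ together with complementary slackness on the box multipliers.

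For the piecewise form, I would use that $\hswm_n(x_n)$ is strictly decreasing in $x_n$ (from strict convexity of $f$ and concavity of $g$). If $\hswm_n(\epsilon_n)>0$, then $\hswm_n$ is positive on the whole interval and the social welfare is strictly increasing in $x_n$, forcing $\xswm_n=\epsilon_n$. If $\hswm_n(0)<0$, symmetrically $\xswm_n=0$. Otherwise $\hswm_n$ has a unique zero $\xz_n\in[0,\epsilon_n]$ which is both the unconstrained and the constrained maximizer.

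For the comparative statics, I would track how $\hswm_n$ shifts with each parameter. Since ${\partial g}/{\partial \bar{x}}<0$, increasing $\bar{w}$ shifts $\hswm_n(x_n)$ uniformly downward in $x_n$; strict monotonicity then pushes its zero $\xz_n$ leftward, so $\xswm_n$ decreases in $\bar{w}$. The same logic with $\bar{d}$ in the denominator gives $\xswm_n$ increasing in $\bar{d}$. For independence of $\xswm_n$ from $d_n$ at fixed $\bar{d}$, observe that $d_n$ appears only as a strictly positive multiplicative factor in \eqref{eq:fod}, so neither the sign of $\hswm_n$ at the endpoints nor the location of its zero depends on $d_n$.

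The main obstacle is that $\bar{x}(\xbswm)$ appearing inside ${\partial g}/{\partial \bar{x}}$ is itself endogenous in $\xb$, so a parameter change induces both a direct shift of $\hswm_n$ and an indirect shift through $\bar{x}(\xbswm)$. To handle this, I would exploit that at any interior optimum the KKT equation reads $f'(\xswm_n)=(\bar{w}/\bar{d})\,g'(\bar{x}(\xbswm))$, whose right-hand side is common to all clients; strict monotonicity of $f'$ then forces the interior components of $\xbswm$ to coincide, collapsing the joint system to a scalar fixed-point equation in $\bar{w}$ and $\bar{d}$ alone. This decouples the interior solution from $d_n$ and lets the monotonicity arguments go through cleanly after separately handling the (trivially $d_n$-independent) boundary cases $\xswm_n\in\{0,\epsilon_n\}$.
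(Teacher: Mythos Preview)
Your proposal is correct and follows essentially the same route the paper indicates: convexity of \eqref{eq:swm} plus KKT conditions give the three-case characterization via the strictly decreasing $\hswm_n$, and the comparative statics come from tracking how the zero of $\hswm_n$ moves with $\bar{w}$, $\bar{d}$, and $d_n$. Your explicit treatment of the endogeneity of $\bar{x}(\xbswm)$ inside $g'$---collapsing the interior KKT equations to a common scalar condition via strict monotonicity of $f'$---is a helpful refinement that the paper's brief sketch leaves implicit (and which in fact anticipates Proposition~\ref{them:sufficnt}).
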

Lemma \ref{lem:opt} can be proven by analyzing the impact of system parameters  $\bar{w}$, $\bar{d}$, and $d_n$ on the zero point of $ \hswm_n(x_n) $ in \eqref{eq:fod}. The detailed proof is given in \cite{appendix}.

Lemma \ref{lem:opt} shows that if the average unit revenue of the clients (i.e., $\bar{w}$) is larger,  then each client $n$ should  choose a smaller noise rate $x_n$ (i.e., a higher data quality) to maximize the social welfare. Moreover, if the average number of data samples is smaller, then a smaller  noise rate $x_n$  should be chosen by the clients in order to maintain sufficient data samples with correct labels. However, given a fixed average number of data samples, the increase in the number of data samples of a client does not affect the optimal solution. 
% We can obtain the optimal solution to the social welfare maximization by taking the partial derivative of the payoff function.
% which strictly decreases with $x_n$ because of the strict convexity of f and the concavity of g.
% So the optimal solution to the social welfare maximization of organization $n$ is

% There are two insights. First, The larger $\sum_{i\in\N}w_i$ leads to the smaller $x_n$, which means that if organizations have a larger sum of unit utility $w_n$, then they have smaller error correction rate. Second, $\sum_{i\in\N}d_i$ other than $d_n$ contributes to $x_n$, which means that an organization's error correction rate has no relationship with its size of dataset if the total amount of data and error correction rate of other organizations remain unchanged.

% As the above equations cannot be directly solved, in order to quickly solve for x, we analyzed the solution properties and obtained Theorem $1$. In order to prove subsequent theorems conveniently, let us first present a lemma.

According to Lemma \ref{lem:opt}, we further derive the necessary condition for the optimal solution $\xbswm$. The condition can provide additional insights and will be used for the algorithm design in Section \ref{subsec:alg-swm}. We first present a lemma to support our analysis, with the proof  given in \cite{appendix}.

\begin{lemma}\label{lem:con}
If $l(\cdot)$ is convex, then for any $a < b < c < d$, 
\begin{equation}\label{eq:h}
    {l\left(b\right) - l\left(a\right)}/{(b-a)} \leq {l\left(d\right) - l\left(c\right)}/{(d-c)}.
\end{equation}
If  $l(\cdot)$ is strictly convex, then the strict inequality holds in \eqref{eq:h}. If  $l(\cdot)$ is  concave or strictly concave, the inequality operator in \eqref{eq:h} should be replaced by $\geq$ or $>$, respectively.
\end{lemma}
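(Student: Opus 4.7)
The plan is to prove Lemma~\ref{lem:con} via the standard three-slope (or three-chord) inequality for convex functions, which states that whenever $l$ is convex and $x_1 < x_2 < x_3$,
\begin{equation*}
\frac{l(x_2) - l(x_1)}{x_2 - x_1} \;\leq\; \frac{l(x_3) - l(x_2)}{x_3 - x_2}.
\end{equation*}
I would obtain this by writing $x_2$ as the convex combination $\lambda x_1 + (1-\lambda) x_3$ with $\lambda = (x_3 - x_2)/(x_3 - x_1) \in (0,1)$, invoking convexity to get $l(x_2) \leq \lambda l(x_1) + (1-\lambda) l(x_3)$, and performing an elementary rearrangement (multiplying through by $x_3 - x_1$ and splitting the left-hand side as $(x_3 - x_2) l(x_2) + (x_2 - x_1) l(x_2)$). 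When $l$ is strictly convex, the convex-combination inequality is strict for $\lambda \in (0,1)$, so the three-slope inequality is strict as well.

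With the three-slope inequality established, the lemma follows by two applications to the points $a < b < c < d$. First, taking $(x_1, x_2, x_3) = (a, b, c)$ gives $(l(b) - l(a))/(b - a) \leq (l(c) - l(b))/(c - b)$; second, taking $(x_1, x_2, x_3) = (b, c, d)$ gives $(l(c) - l(b))/(c - b) \leq (l(d) - l(c))/(d - c)$. Chaining these two inequalities immediately yields \eqref{eq:h}. Under strict convexity every step above is strict, so the chained inequality becomes strict as well.

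The concave and strictly concave cases follow by applying the (strictly) convex result to $-l$, which is itself (strictly) convex and flips the direction of \eqref{eq:h}. Rather than a real obstacle, the only thing that requires a touch of care is the algebraic rearrangement that converts the convex-combination inequality into the slope form, and the verification that strictness propagates through both the rearrangement and the two-step chaining. Overall, the argument is a direct consequence of the definition of convexity and should fit in a few lines in the appendix.
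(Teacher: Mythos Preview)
Your proposal is correct: the three-slope inequality followed by two applications along the chain $a<b<c<d$ is the textbook route, and strictness as well as the concave variants propagate exactly as you describe. The paper itself does not include a proof in the main text---it only states the lemma and defers the argument to an online technical report---so there is nothing in the paper body to compare against; your argument is the standard one and would be entirely appropriate for that appendix.
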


Now, we can derive the necessary  condition for $\xbswm$.
\begin{proposition}[Necessary Condition for Optimal Solution]\label{them:sufficnt}
An $\xbswm$ is an optimal solution to problem \eqref{eq:swm}  only if $\xswm_n = \min\{\max_{i\in\N} \xswm_i, \epsilon_n\}$ for any $n\in\N$. That is, there exists a threshold $\theta\geq 0$ such that 
\begin{equation}\label{eq:theta1}
    \xswm_n = \left\{\begin{array}{ll}
    \epsilon_n, & \rm{if } ~ \text{$\epsilon_n < \theta$},\\
    \theta,& \rm {otherwise}.
    \end{array}\right.
\end{equation}
\end{proposition}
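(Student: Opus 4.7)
My plan is a pairwise swap argument that exploits the strict convexity of $f$ through Lemma~\ref{lem:con}. The guiding intuition is that at any optimum, shifting a small amount of noise mass from a more-denoised client to a less-denoised one while holding $\sum_i d_i x_i$ (and hence $\bar{x}$) fixed leaves the utility term in \eqref{eq:sw} untouched but strictly decreases the aggregate cost, so such a pair cannot coexist unless the less-denoised client is already stuck at its upper bound $\epsilon_n$.

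First, I would suppose for contradiction that $\xbswm$ is optimal and yet some pair $n, m \in \N$ satisfies $\xswm_n > \xswm_m$ together with $\xswm_m < \epsilon_m$. I would then consider the perturbed profile $\tilde{x}_n = \xswm_n - \gamma$, $\tilde{x}_m = \xswm_m + (d_n/d_m)\gamma$, and $\tilde{x}_i = \xswm_i$ for $i \notin \{n,m\}$. The weight ratio $d_n/d_m$ is chosen precisely so that $\sum_i d_i \tilde{x}_i = \sum_i d_i \xswm_i$; hence $\bar{x}$ and the utility term $\sum_i w_i g(\bar{x})$ in \eqref{eq:sw} are preserved. For $\gamma > 0$ sufficiently small, $\tilde{\xb}$ is feasible and the four points are strictly ordered as $\xswm_m < \tilde{x}_m < \tilde{x}_n < \xswm_n$.

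Next, applying Lemma~\ref{lem:con} to the strictly convex $f$ on these four ordered points yields
\begin{equation*}
\frac{f(\tilde{x}_m) - f(\xswm_m)}{(d_n/d_m)\,\gamma} \;<\; \frac{f(\xswm_n) - f(\tilde{x}_n)}{\gamma},
\end{equation*}
which after clearing denominators rearranges to $d_n[f(\tilde{x}_n) - f(\xswm_n)] + d_m[f(\tilde{x}_m) - f(\xswm_m)] < 0$. All other clients' costs are unchanged, so the aggregate cost term of \eqref{eq:sw} strictly decreases, the social welfare strictly increases, and this contradicts the optimality of $\xbswm$.

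Therefore any optimum satisfies the implication $\xswm_n > \xswm_m \Rightarrow \xswm_m = \epsilon_m$, which forces the threshold structure. Setting $\theta \triangleq \max_{i\in\N}\xswm_i$, every client $m$ with $\xswm_m < \theta$ must have $\xswm_m = \epsilon_m$ (so $\epsilon_m < \theta$), while every client $m$ with $\xswm_m = \theta$ trivially satisfies $\epsilon_m \geq \xswm_m = \theta$; in both cases $\xswm_m = \min\{\theta, \epsilon_m\}$, which is exactly \eqref{eq:theta1}. The main obstacle is calibrating the weight ratio $d_n/d_m$ in the perturbation so that mass conservation (to neutralize the utility term) and the strict four-point ordering needed to invoke Lemma~\ref{lem:con} in its strict form hold simultaneously for small $\gamma$; once this calibration is in place, the rest of the argument is mechanical.
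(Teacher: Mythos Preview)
Your proposal is correct and follows essentially the same approach as the paper: a mass-preserving pairwise perturbation that leaves $\bar{x}$ (hence the utility) unchanged and then invokes Lemma~\ref{lem:con} on the strictly convex $f$ to show the aggregate cost strictly drops. The only cosmetic differences are that the paper swaps the roles of $n$ and $m$ and parameterizes the shift as $\delta/d_n,\,\delta/d_m$ rather than $\gamma,\,(d_n/d_m)\gamma$; your write-up is in fact a bit more careful than the paper's in spelling out how the threshold $\theta=\max_i \xswm_i$ realizes \eqref{eq:theta1}.
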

\begin{proof}
We prove by contradiction. Suppose there exists an $x_n$ that satisfies  $x_n < \epsilon_n$ and $x_n < x_m $ for an $m\in\mathcal{N}$ in  solution $\xb$. 
Then, the social welfare can be increased by replacing $x_n$ and $x_m$ with $x_n^{\prime}$ and 
$x_m^{\prime}$, respectively. Here, $x_n^{\prime} = x_n + {\delta}/{d_n}$, $x_m^{\prime} = b - {\delta}/{d_m}$ for a small positive  $\delta$ that ensures  $x_n^{\prime} < \epsilon_n$ and $x_n^{\prime}  < x_m^{\prime} $. The social welfare is increased because the model accuracy remains unchanged but the total cost is reduced, since
$d_n\left(f\left(x_n\right) - f\left(x_n^{\prime}\right)\right)-d_m\left(f\left(x_m^{\prime}\right)-f\left(x_m\right)\right)$
is greater than zero based on Lemma \ref{lem:con}. Thus, $\xb$ cannot be the socially optimal solution. % that maximizes the social welfare.
\end{proof}
Based on Proposition \ref{them:sufficnt},  to maximize social welfare, clients  need to give priority to  reducing the noise rate of those clients' datasets with the highest noise rate. Meanwhile, Proposition \ref{them:sufficnt} inspires the efficient algorithm design for deriving the optimal solution in Section \ref{subsec:alg-swm}.

\subsection{Nash Equilibrium of Label Denoising Game}
We derive the NE of Game \ref{game:error} based on the best response of each client $n\in\N$. Specifically, let $\hne_n(x_n)\triangleq {\partial P_n(\xb)}/{\partial x_n}$ denote the partial derivative of the payoff function of client $n$ with respect to $x_n$. Thus, 
\begin{equation}\label{eq:best}
    \hne_n\left(x_n\right)  = \left(\frac{w_n}{N\bar{d}} \frac{\partial g\left(\bar{x}(\x)\right)}{\partial \bar{x}(\x)} - \frac{\partial f(x_n)}{\partial x_n}\right)d_n,
\end{equation}
which is strictly decreasing in $x_n$ due to the strict convexity of $f(\cdot)$ and the concavity of $g(\cdot)$. Let $\xzne_n $ denote the zero point of $\hne_n\left(x_n\right)$. The NE is determined as follows.
\begin{proposition}[NE]\label{lem:ne}
   The unique NE of Game \ref{game:error} is 
    \begin{equation}
\xne_n = \left\{\begin{array}{ll}
    \epsilon_n, & \rm{if }\; \text{$\hne_n \left(\epsilon_n\right) > 0$},  \\
    0, & \rm{if } \; \text{$\hne_n\left(0\right) < 0$} , \\
    \xzne_n, & \rm{otherwise}.
    \end{array}\right.
\end{equation}
As either $w_n$ increases or $\bar{d}$ decreases,  $\xne_n$ decreases. %\rev{As the  average number of data samples $\bar{d}$ decreases,  $\xne_n$ decreases.}
\end{proposition}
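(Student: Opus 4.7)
The plan is to mirror the structure of the proof of Lemma~\ref{lem:opt} but handle the strategic interaction through the aggregate $\bar{x}(\x)$, which is absent in the centralized problem. I would proceed in three stages: (i) characterize each client's best response to an arbitrary opponent profile, yielding the three-case formula; (ii) establish uniqueness of the NE by exploiting the aggregative structure of Game~\ref{game:error}; and (iii) derive the monotonicity of $\xne_n$ in $w_n$ and $\bar{d}$ via a contradiction argument on the system of FOCs.

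For (i), fix $\xb_{-n}$. The payoff $P_n(x_n,\xb_{-n}) = w_n g(\bar{x}(\x)) - d_n\bigl(f(x_n)-f(\epsilon_n)\bigr)$ is concave in $x_n$, because $g$ is concave of the linear-in-$x_n$ argument $\bar{x}(\x)$ and $-f$ is concave. By \eqref{eq:best}, $\hne_n$ is continuous and strictly decreasing in $x_n$, so client $n$'s best response is the projection of the unique zero $\xzne_n$ onto $[0,\epsilon_n]$, which produces exactly the three cases in the statement.

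For (ii), since $P_n$ depends on $\xb_{-n}$ only through $\bar{x}(\x)$, Game~\ref{game:error} is an aggregative game. Writing $B_n(\bar{x})$ for client $n$'s best response to a prescribed aggregate level $\bar{x}$, the FOC $f'(B_n(\bar{x})) = \frac{w_n}{N\bar{d}}\, g'(\bar{x})$ combined with $f''>0$ and $g''\le 0$ shows that $B_n(\cdot)$ is non-increasing in $\bar{x}$. Any NE aggregate must satisfy the consistency equation $N\bar{d}\,\bar{x} = \sum_{n\in\N} d_n B_n(\bar{x})$; the left-hand side is strictly increasing in $\bar{x}$ while the right-hand side is non-increasing, so a continuous-crossing argument on the compact interval $[0,\max_{n\in\N}\epsilon_n]$ yields exactly one solution, and this $\bar{x}$ pins down every $\xne_n = B_n(\bar{x})$.

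For (iii), suppose toward contradiction that $w_n$ is raised but $\xne_n$ strictly increases. Strict convexity of $f$ makes $f'(\xne_n)$ strictly larger; the FOC for $n$ then requires $\frac{w_n}{N\bar{d}}\, g'(\bar{x})$ to strictly increase, which, since $g'\le 0$ and $w_n$ has grown, is possible only when $\bar{x}$ strictly decreases. But every other client $m\neq n$ has the same $w_m/(N\bar{d})$ as before, and with $\bar{x}$ strictly decreased the FOC for $m$ gives $f'(\xne_m)$ weakly larger, hence $\xne_m$ weakly larger. Averaging with weights $d_i/(N\bar{d})$ then forces $\bar{x}$ to weakly increase, contradicting its earlier strict decrease. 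The case of $\bar{d}$ decreasing reduces to the same argument after re-parameterising through $\mu_i \triangleq w_i/\bar{d}$, since a drop in $\bar{d}$ raises every $\mu_i$ and an analogous contradiction goes through. I expect the main technical obstacle to be dovetailing this FOC-based argument with the corner cases $\xne_n\in\{0,\epsilon_n\}$; I would handle them by replacing each FOC equality with the appropriate KKT inequality and tracking whether each client remains at, leaves, or enters a corner under the perturbed parameters.
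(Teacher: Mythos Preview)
Your best-response derivation and your aggregative-game uniqueness argument are correct and match the paper's stated approach (best responses plus analysis of the zero point of $\hne_n$, with uniqueness deferred to the technical report). The consistency-equation route you take for uniqueness is a standard and clean elaboration of what the paper only sketches.

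There is, however, a genuine gap in step~(iii) for the $\bar d$ comparative static. In the $w_n$ case your contradiction hinges on the observation that for every $m\neq n$ the coefficient $w_m/(N\bar d)$ is \emph{unchanged}, so a strict decrease in $\bar{x}$ forces each $\xne_m$ weakly up. When $\bar d$ drops, every $\mu_m=w_m/\bar d$ moves, so that step no longer holds and the ``analogous contradiction'' does not go through verbatim: with $\mu_m$ larger and $g'(\bar{x})$ less negative, $\mu_m g'(\bar{x})$ can move either way, and you cannot conclude that $\xne_m$ weakly increases. The fix is to exploit the symmetry of the perturbation rather than mimic the $w_n$ proof. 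From client $n$'s interior FOC, the hypothesis $\xne_n$ strictly increases yields
\[
\frac{g'(\bar{x}^{\mathrm{new}})}{\bar d^{\mathrm{new}}}\;>\;\frac{g'(\bar{x}^{\mathrm{old}})}{\bar d^{\mathrm{old}}},
\]
and this ratio is the \emph{same} object appearing in every client's FOC. Hence every interior $\xne_m$ strictly increases, so (with proportional scaling of the $d_i$, which keeps the weights $d_m/(N\bar d)$ fixed) $\bar{x}$ strictly increases. But then $g'(\bar{x}^{\mathrm{new}})\le g'(\bar{x}^{\mathrm{old}})<0$ together with $\bar d^{\mathrm{new}}<\bar d^{\mathrm{old}}$ forces $g'(\bar{x}^{\mathrm{new}})/\bar d^{\mathrm{new}}<g'(\bar{x}^{\mathrm{old}})/\bar d^{\mathrm{old}}$, contradicting the displayed inequality. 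This is the form the contradiction must take; your corner-case discussion via KKT inequalities then applies here exactly as you describe.
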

Proposition \ref{lem:ne} is proven based on the best response of the clients and the impact of system parameters on the zero point of $ \hne_n \left(x_n\right)$. The proof of uniqueness is given in an online technical report \cite{appendix}. In contrast with the optimal solution in Lemma \ref{lem:opt}, client $n$ cares about its own unit revenue, i.e., it reduces its noise rate under a larger $w_n$. In addition, similar to the optimal solution, if the average number of data samples decreases, client $n$ has the incentive to reduce its noise rate to improve the accuracy of the global model.

% There are two insights. First, The larger $w_n$ leads to the smaller $x_n$, which means that if the organization $n$ has a larger unit utility $w_n$, then it has a smaller error correction rate. Second, $\sum_{i\in\N}d_i$ other than $d_n$ contributes to $x_n$, which means that an organization's error correction rate has no relationship with its size of dataset if the total amount of data and error correction rate of other organizations remain unchanged.
%We can obtain the best response of each organization's Nash equilibrium by taking the partial derivative of the payoff function. Given the $\xb$, let

\subsection{Price of Stability in Model Accuracy}
%\subsection{Pirce of Stability}
We characterize the PoS\footnote{We use the term ``PoS" to refer to the performance reduction due to the self-interested clients. In Section \ref{sec:case}, we will follow the conventional mathematical definition of  PoS and define it  as the ratio of the maximum social welfare to the  social welfare under the unique NE in our work.} by showing that when compared wtih the optimal solution to problem \eqref{eq:swm},  the NE of Game \ref{game:error} leads to a higher average noise rate and hence a lower global model accuracy. Note that determining such an analytical result is not straightforward, because the socially optimal solution is not the solution that maximizes the model accuracy due to the fact that  higher accuracy is at the expense of higher correction costs for the clients. Specifically, 

\begin{lemma}[PoS in Label Denoising]\label{lem:x} The NE of Game \ref{game:error} leads to a higher average noise rate than the optimal solution to problem \eqref{eq:swm}, i.e.,
$\bar{x}(\xbne)\geq  \bar{x}(\xbswm)$.
\end{lemma}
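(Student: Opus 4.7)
The plan is to prove the inequality by contradiction: assume $\bar{x}(\xbswm) > \bar{x}(\xbne)$ and deduce that $\xne_n \ge \xswm_n$ for \emph{every} client $n\in\N$, which, after multiplying by $d_n$ and summing, contradicts the assumed ordering of the weighted averages. A naive pointwise comparison of best responses (at a common $\xb_{-n}$, the socially optimal best response is smaller than the game best response because $\bar{w}/\bar{d} \ge w_n/(N\bar{d})$) is not enough, because the decisions are coupled through the common argument $\bar{x}(\xb)$; so we must combine the contradictory hypothesis with the optimality conditions of Lemma~\ref{lem:opt} and Proposition~\ref{lem:ne}.

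The first step is to establish a ``marginal-value'' chain. From $w_n \le \sum_{i\in\N} w_i = N\bar{w}$ we get $\bar{w}/\bar{d} \ge w_n/(N\bar{d}) \ge 0$; from $g$ being concave and decreasing, $\partial g/\partial \bar{x}$ is non-positive and non-increasing, so the hypothesis $\bar{x}(\xbswm) > \bar{x}(\xbne)$ yields
\begin{equation*}
\frac{\bar{w}}{\bar{d}}\,\frac{\partial g(\bar{x}(\xbswm))}{\partial \bar{x}}
\;\le\;\frac{w_n}{N\bar{d}}\,\frac{\partial g(\bar{x}(\xbne))}{\partial \bar{x}}
\end{equation*}
for every $n$: the left-hand side multiplies a more-negative number by a larger nonnegative coefficient.

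The second step combines this with the monotonicity of $\hswm_n$ and $\hne_n$ in \eqref{eq:fod} and \eqref{eq:best}. Suppose for contradiction that some $n$ has $\xne_n < \xswm_n$; then automatically $\xne_n < \epsilon_n$ and $\xswm_n > 0$, which force $\hne_n(\xne_n) \le 0$ and $\hswm_n(\xswm_n) \ge 0$, i.e., $\partial f/\partial x_n|_{\xne_n} \ge (w_n/(N\bar{d}))\,\partial g/\partial \bar{x}|_{\bar{x}(\xbne)}$ and $\partial f/\partial x_n|_{\xswm_n} \le (\bar{w}/\bar{d})\,\partial g/\partial \bar{x}|_{\bar{x}(\xbswm)}$. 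Splicing in the marginal-value chain gives $\partial f/\partial x_n|_{\xne_n} \ge \partial f/\partial x_n|_{\xswm_n}$, and strict convexity of $f$ (so $\partial f/\partial x_n$ is strictly increasing) then forces $\xne_n \ge \xswm_n$, contradicting the supposed strict inequality. Hence $\xne_n \ge \xswm_n$ for every $n$; weighting by $d_n$ and summing yields $\bar{x}(\xbne) \ge \bar{x}(\xbswm)$, the contradiction that closes the proof.

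The main obstacle is the bookkeeping around the boundary cases of $\xne_n$ and $\xswm_n$: one has to notice that the hypothesis $\xne_n < \xswm_n$ already forces $\xne_n < \epsilon_n$ and $\xswm_n > 0$, which is exactly what picks out the one-sided inequalities on $\hne_n$ and $\hswm_n$ needed for the chain to close. Once this is observed, the remainder is a short substitution; the implicit accuracy statement $A(\xbne) \le A(\xbswm)$ then follows immediately from $\bar{x}(\xbne) \ge \bar{x}(\xbswm)$ together with the fact that $g$ is decreasing.
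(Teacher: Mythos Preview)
Your proof is correct and follows essentially the same contradiction strategy as the paper: assume $\bar{x}(\xbne)<\bar{x}(\xbswm)$, pick an $n$ with $\xne_n<\xswm_n$, and splice together the optimality conditions at the NE and at the social optimum using the concavity of $g$, the strict convexity of $f$, and $w_n\le N\bar w$. The only cosmetic difference is that you work directly with the derivative conditions $\hne_n(\xne_n)\le 0$ and $\hswm_n(\xswm_n)\ge 0$, whereas the paper perturbs by a small $\delta$ and invokes the secant-slope inequality of Lemma~\ref{lem:con}; the two are equivalent once differentiability is assumed.
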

\begin{proof}
    We prove this by contradiction. Suppose  $\bar{x}(\xbne)< \bar{x}(\xbswm)$. Thus, there exists an  $n\in\N$ such that $\xne_n<\xswm_n$. %\revw{$\frac{\sum_{i\in\N}d_i\xne_i}{\sum_{i\in\N}d_i} = a < \frac{\sum_{i\in\N}d_i\xswm_i}{\sum_{i\in\N}d_i} = b$ and $\xne_n = c < \xswm_n = d$ for some $n$.}
    Then, there exists a small positive $\delta$ that ensures  
    $\bar{x}(\xbne) + {d_n\delta}/(N\bar{d}) < \bar{x}(\xbswm) - {d_n\delta}/(N\bar{d})$ and $\xne_n + \delta < \xswm_n - \delta$. Based on the definition of $\xbne$ and $\xbswm$, % are the NE and social welfare maximization solution to problem \eqref{eq:swm}, respectively,
    $w_n(g(\bar{x}(\xbne)) - g(\bar{x}(\xbne) + {d_n\delta}/{(N\bar{d})}))
        \geq d_n(f(\xne_n ) - f(\xne_n  + \delta))$ and $N\bar{w}(g(\bar{x}(\xbswm) - {d_n\delta}/{(N\bar{d})}) - g(\bar{x}(\xbswm)))  \leq d_n(f(\xbswm - \delta) - f(\xbswm))$, respectively.
    %\begin{multline}
     %   w_n\left(g\left(\bar{x}(\xbne)\right) - g\left(\bar{x}(\xbne) + {d_n\delta}/{(N\bar{d})}\right)\right)\\
     %   \geq d_n\left(f\left(\xne_n \right) - f\left(\xne_n  + \delta\right)\right),
     %  \end{multline}
    %\begin{multline} 
    %N\bar{w}\left(g\left(\bar{x}(\xbswm) - {d_n\delta}/{(N\bar{d})}\right) - g\left(\bar{x}(\xbswm)\right)\right) \\
    %    \leq d_n\left(f\left(\xbswm - \delta\right) - f\left(\xbswm\right)\right).
    %\end{multline}
    However, according to  Lemma \ref{lem:con},
    $g(\bar{x}(\xbswm) - {d_n\delta}/{(N\bar{d})}) - g(\bar{x}(\xbswm))  \geq g(\bar{x}(\xbne)) - g(\bar{x}(\xbne) + {d_n\delta}/{(N\bar{d})})$ and $f(\xne_n) - f(\xne_n + \delta)  > f(\xswm_n - \delta) - f(\xswm_n)$.
    %\begin{multline}
    %        g\left(\bar{x}(\xbswm) - {d_n\delta}/{(N\bar{d})}\right) - g\left(\bar{x}(\xbswm)\right) \\ \geq g\left(\bar{x}(\xbne)\right) - g\left(\bar{x}(\xbne) + {d_n\delta}/{(N\bar{d})}\right),
    %\end{multline}
    %\begin{equation}
    %        f\left(\xne_n\right) - f\left(\xne_n + \delta\right)  > f\left(\xswm_n - \delta\right) - f\left(\xswm_n\right).
    %\end{equation}
     This leads to a contradiction.
\end{proof}

% Let $\Aswm$ and $\Ane$ be the accuracy of the global model under social welfare maximization and NE, respectively. 
Based on Lemma \ref{lem:x}, we derive the main result on accuracy. 
\begin{theorem}[PoS in Model Accuracy]\label{them:price}
     The socially optimal solution always leads to a higher accuracy of the global model than the NE does, i.e.,
    $A(\xbswm)\geq A(\xbne)$.
\end{theorem}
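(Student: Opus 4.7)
The plan is to treat Theorem \ref{them:price} as an almost immediate corollary of Lemma \ref{lem:x}, using the monotonicity of the accuracy function $g(\cdot)$ introduced in \eqref{eq:acc}. The only pieces of machinery I need are (i) the definition $A(\xb)=g(\bar{x}(\xb))$ and (ii) the assumption that $g(\cdot)$ is decreasing in its argument, both of which are stated just after equation \eqref{eq:acc}.

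First, I would apply the identity $A(\xb)=g(\bar{x}(\xb))$ to both $\xbswm$ and $\xbne$, rewriting the target inequality $A(\xbswm)\geq A(\xbne)$ as $g(\bar{x}(\xbswm))\geq g(\bar{x}(\xbne))$. Next, I would invoke Lemma \ref{lem:x}, which has already done the hard work of showing that the NE cannot produce a lower average noise rate than the socially optimal solution, i.e., $\bar{x}(\xbne)\geq \bar{x}(\xbswm)$. Finally, since $g(\cdot)$ is a decreasing function, applying $g$ to both sides of this inequality reverses it, giving exactly $g(\bar{x}(\xbswm))\geq g(\bar{x}(\xbne))$, which is the desired bound on accuracy.

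There is essentially no obstacle at this stage: the substantive reasoning is entirely carried by Lemma \ref{lem:x} (the contradiction argument involving convexity/concavity via Lemma \ref{lem:con}), and Theorem \ref{them:price} is a one-line consequence. If I wanted to strengthen the statement, I would note that strict monotonicity of $g(\cdot)$ together with any strict inequality in Lemma \ref{lem:x} would yield a strict gap $A(\xbswm)>A(\xbne)$; but since $g$ is only assumed decreasing (not strictly) and Lemma \ref{lem:x} is stated with a weak inequality, the weak form in the theorem statement is the appropriate conclusion to record.
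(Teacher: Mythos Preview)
Your proposal is correct and matches the paper's approach exactly: the paper states Theorem~\ref{them:price} immediately after Lemma~\ref{lem:x} with the remark ``Based on Lemma~\ref{lem:x}, we derive the main result on accuracy,'' treating it as a direct consequence of the inequality $\bar{x}(\xbne)\geq \bar{x}(\xbswm)$ together with the fact that $A(\xb)=g(\bar{x}(\xb))$ is decreasing. There is nothing to add.
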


 Theorem \ref{them:price} implies that, when compared with  maximizing the social welfare, clients always obtain a global model with lower accuracy by considering their own interests at NE. % That is, self-interest clients make the model accuracy lower, which reduces the social welfare. 

Moreover, we present the PoS in social welfare for motivating our experiments in Section \ref{sec:case}. This result is obtained  based on the definition of social welfare maximization. %In Section \ref{sec:case}, we present empirical results related to such a PoS under two types of label error cases.
\begin{lemma}[PoS in Social Welfare]\label{Lem: price} The socially optimal solution always leads to a higher social welfare than NE, i.e., 
    $\text{SW}(\xbswm)\geq \text{SW}(\xbne )$.
\end{lemma}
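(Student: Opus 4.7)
The plan is to observe that this lemma follows almost immediately from the definition of the socially optimal solution, with essentially no additional work beyond a feasibility check. Unlike Lemma \ref{lem:x} and Theorem \ref{them:price}, which compare the two outcomes on quantities (average noise rate, model accuracy) that are not directly optimized by either party, here the object being compared (social welfare) is precisely the objective of problem \eqref{eq:swm}, so the inequality is tautological once one checks that $\xbne$ is a candidate solution.

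Concretely, first I would verify that the NE strategy profile $\xbne$ is feasible for problem \eqref{eq:swm}. By Proposition \ref{lem:ne}, each $\xne_n$ takes one of the values $\epsilon_n$, $0$, or the interior zero $\xzne_n$ of $\hne_n$, and each of these lies in $[0,\epsilon_n]$. Hence $\xbne$ satisfies the box constraints $x_n\in[0,\epsilon_n]$ for all $n\in\N$.

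Next, since $\xbswm$ is defined in \eqref{eq:swm} as the maximizer of $\text{SW}(\xb)=\sum_{n\in\N}P_n(\xb)$ over the feasible set $\prod_{n\in\N}[0,\epsilon_n]$, and $\xbne$ belongs to this set, the definition directly yields $\text{SW}(\xbswm)\ge \text{SW}(\xbne)$, which is the desired inequality. The main (and only) obstacle is really just to state the argument cleanly; no convexity argument, no application of Lemma \ref{lem:con}, and no contradiction reasoning of the style used in Lemma \ref{lem:x} is needed, because the inequality is structural rather than a consequence of the specific form of $g(\cdot)$ or $f(\cdot)$.
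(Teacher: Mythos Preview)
Your proposal is correct and matches the paper's own justification: the paper explicitly states that this result ``is obtained based on the definition of social welfare maximization,'' which is exactly the feasibility-plus-optimality argument you give.
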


% Next, we compare maximizing social welfare with Nash equilibrium in the aspect of error correction rate and the accuracy of the model.

% This means that when all organizations consider maximizing social welfare, they will choose smaller error correction rate than when they consider their own interests when $g$ is linear.

% When $g(x)$ is linear, we can further obtain a stronger conclusion. Under the condition of maximizing social welfare, the error correction rate of each organization is smaller than that under Nash equilibrium.

% However, it is worth noting that theorem \ref{them:x} does not hold when $g(\cdot)$ is not linear. \revw{This is mainly because ...
%There is a counterexample: $n = 2$, $\epsilon_1 = \epsilon_2 = 0.2$, $w_1 = 9$, $w_2 = 1$, $C_1\left(x\right) = C_2\left(x\right)= -\ln{x} + \ln{0.2}$, $p\left(x\right) = 1 +\ln{\left(1 - x_1 - x_2\right)}$. It's easy to find that there is a Nash Equilibrium that $x_1 = 0.08$, $x_2 = 0.2$, and a social welfare maximization that $x_1 = x_2 = \frac{1}{12}$.}

\section{Algorithm Design}
We propose algorithms for determining the optimal solution to problem \eqref{eq:swm} and NE, respectively.  These algorithms are applied for the case study in Section \ref{sec:case}. 
Designing a low-complexity algorithm for deriving the optimal solution is nontrivial due to the multi-variate decisions. Furthermore, we propose a best response algorithm to derive the NE.

\subsection{Social Welfare Maximization Algorithm}\label{subsec:alg-swm}
% \revw{Distributed approach to finding the optimal solution? Not let organizations make decision? incentive mechanism to submit optimal message profiles?}
% We first transform problem \eqre{eq:swm} to an equivalent problem. Then, we 
% The main idea is to transform the problem \eqref{eq:swm}  to an
Based on Lemma \ref{lem:opt}, we can determine the optimal solution to problem \eqref{eq:swm} by finding the threshold $\theta$. %Thus, the main idea is to transform problem \eqref{eq:swm} to a  problem with respect to $\theta $ that can be solved easily by ternary search. 
Let $x_n(\theta) \triangleq  \min\left(\theta, \epsilon_n\right)$, and  $\xb(\theta)\triangleq (x_n(\theta),n\in\N)$. Define a function $H\left(\theta\right) \triangleq \text{SW}\left(\xb(\theta)\right)$. Then, we  formulate a  problem with respect to $\theta$ as follows:
\begin{subequations}\label{eq:theta}
    \begin{align}
        \underset{\theta}{\text{maximize}}
        & ~~~ H\left(\theta\right)  \\
        \text{subject to}
        & ~~~ \theta \in [0, \max_{n \in \N}\epsilon_n].
    \end{align}
\end{subequations}
Given the optimal solution $\theta^{\textsc{swm}}$ to problem \eqref{eq:theta}, we can obtain the optimal solution to problem \eqref{eq:swm} based on \eqref{eq:theta1}. 

\begin{algorithm}[!t]
\setstretch{1}
 \caption{Social Welfare Maximization Algorithm} \label{alg:optimal}
        \begin{algorithmic}[1]
        \State $\theta^{\textsc{swm}}$ $\leftarrow$ determine the value of $\textstyle \theta \in [0, \max_{n \in \N}\epsilon_n]$ that maximizes  $H(\theta)$ using ternary search;
        \item  $\xswm_n$ $\leftarrow$ given $\theta^{\textsc{swm}}$, compute $x_n$ using \eqref{eq:theta1}  for  $n\in\N$;
             %\State $ x_n \gets \epsilon_n,n\in\N$; $n\gets 1$; $\text{Convg\_Ind} \gets \text{FALSE}$\;
             %\While {$ \text{Not Convg\_Ind}$}
              %  \State  $\text{Convg\_Ind} \gets \text{TRUE}$\;
               %     \State $tmp_n \gets$ zero point of $\hne_n(x_n)$
                %   \If {$|x_n - tmp_n| > \text{Convg\_Thresh}$}
            	%	      \State $\text{Convg\_Ind} \gets \text{FALSE}$
            	 %    \EndIf
                 % \State $x_n\gets tmp_n$; $n\gets \text{mod}(n,N)+1$  
             %\EndWhile
        \end{algorithmic}  
    \end{algorithm}
    
To solve problem \eqref{eq:theta}, we derive the property of  $H\left(\theta\right)$.
\begin{lemma}[Function Property]\label{lem:hh} $H\left(\theta\right)$ is a unimodal function, i.e., $H\left(\theta\right)$ is monotonically increasing up to a certain value of $\theta$ and then monotonically decreasing.
\end{lemma}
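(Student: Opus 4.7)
The plan is to analyze $H(\theta)$ piecewise by exploiting the explicit structure of $\xb(\theta)$. Relabel clients so that $\epsilon_1 \le \epsilon_2 \le \cdots \le \epsilon_N$, and set $\epsilon_0 \triangleq 0$. On each open sub-interval $(\epsilon_k, \epsilon_{k+1})$, clients $1,\ldots,k$ satisfy $x_n(\theta) = \epsilon_n$ (constant in $\theta$, incurring zero denoising cost), while clients $k+1,\ldots,N$ satisfy $x_n(\theta) = \theta$. Substituting into $\text{SW}(\xb(\theta))$ and differentiating should give, after the cost terms with $n\le k$ drop out,
\begin{equation}
H'(\theta) \;=\; \phi(\theta) \sum_{n > k} d_n, \qquad \phi(\theta) \triangleq \frac{\bar{w}}{\bar{d}}\, g'\!\big(\bar{x}(\xb(\theta))\big) - f'(\theta).
\end{equation}

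The key observation will be that $\phi(\theta)$, which controls the sign of $H'(\theta)$, does not depend on the sub-interval index and is strictly decreasing on the whole domain $[0,\max_n \epsilon_n]$. Indeed, $\bar{x}(\xb(\theta))$ is continuous, non-decreasing, and piecewise linear in $\theta$ by its definition; concavity of $g$ makes $g'$ non-increasing, so the first term of $\phi$ is non-increasing in $\theta$; strict convexity of $f$ makes $-f'$ strictly decreasing. Hence $\phi$ crosses zero at most once, and necessarily from positive to negative, at some $\theta^\star \in [0,\max_n \epsilon_n]$ (possibly at an endpoint).

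Since $H$ is continuous on its domain (as both $\xb(\theta)$ and $\text{SW}(\cdot)$ are continuous), the sign pattern of $H'$ on the open sub-intervals will stitch together to yield monotone increase on $[0,\theta^\star]$ and monotone decrease on $[\theta^\star, \max_n \epsilon_n]$, which is exactly the unimodality claim.

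The main technical obstacle will be the breakpoints $\theta = \epsilon_k$, where the coefficient $\sum_{n > k} d_n$ drops and $H'$ may exhibit jump discontinuities. I plan to handle this by observing that the two one-sided derivatives of $H$ at each $\epsilon_k$ share the sign of $\phi(\epsilon_k)$, so the monotone behavior carries cleanly across breakpoints and cannot introduce a spurious second turning point. Apart from this bookkeeping, the argument reduces to the elementary fact that a continuous function whose one-sided derivatives are governed by a strictly monotone quantity is unimodal.
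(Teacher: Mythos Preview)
Your argument is correct. The derivative computation $H'(\theta)=\phi(\theta)\sum_{n>k}d_n$ on each sub-interval is right, and the key point---that the sign-controlling factor $\phi(\theta)=\frac{\bar w}{\bar d}\,g'(\bar x(\xb(\theta)))-f'(\theta)$ is a single strictly decreasing function of $\theta$ independent of which sub-interval we are on---cleanly yields unimodality once combined with continuity of $H$ and the observation that the one-sided derivatives at each breakpoint share the sign of $\phi$.

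The paper itself defers the proof of this lemma to an online technical report, so a direct line-by-line comparison is not possible from the text provided. That said, your $\phi(\theta)$ is exactly the bracketed term in the paper's expression for $\hswm_n(x_n)$ (the partial derivative of social welfare), evaluated along the curve $x_n=\theta$; so your approach is the natural one suggested by the paper's own setup, and it is very likely the same argument the authors have in mind.
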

The proof is given in the online technical report \cite{appendix}. Based on Lemma \ref{lem:hh}, since $H(\theta)$ is a unimodal function with respect to a single-dimensional variable $\theta$, we can find its maximum point by ternary search, based on which we can determine the optimal solution to problem \eqref{eq:swm} using Proposition \ref{them:sufficnt}. We present the algorithm in Algorithm \ref{alg:optimal}. 
The time complexity for obtaining optimal $\theta$ is $\mathcal{O}(N\log({1}/{\mu}))$, where $\mu$ denotes a pre-defined threshold that bounds the gap between algorithm output and ground truth values of $\theta$.

\subsection{Best Response Algorithm for NE}
According to \eqref{eq:best}, we propose a best response algorithm in Algorithm \ref{alg:best} to find the NE of Game \ref{game:error}. Sepcifically, given the decisions of other clients, each  client alternatively updates its noise rate to maximize its payoff. The algorithm terminates until convergence, i.e., the decision update gap of every client is smaller than a predefined Convg\_Thresh.

\begin{comment}
    \begin{algorithm}[!t]
\setstretch{1}
 \caption{\revw{Best Response Algorithm}} \label{alg:best}
        \begin{algorithmic}
           % \Require 
           % \textbf{Sizes of $n$ datasets} $D=(d_1, d_2, ..., d_n)$ \\ 
           % \textbf{\ \ \ \ \ Initial error rates of $n$ datasets} $E = (\epsilon_1, \epsilon_2,..., \epsilon_n)$  \\   
            %\textbf{\ \ \ \ \ \ Revenue factors of $n$ datasets} $W = (w_1, w_2,..., w_n)$  \\ 
            %\textbf{\ \ \ \ \ Function} $f$, \textbf{ Function} $g$  \\ 
            %\textbf{\ \ \ \ \ Convergence threshold} $\theta$  
            
            %\Ensure Corrected error rates of $n$ datasets $X = (x_1, x_2,..., x_n)$ 

            %\Function {Best-Response}{$D$, $E$, $W$, $\theta$} 
             \State $ x_n \gets \epsilon_n,n\in\N$; $n\gets 1$; $\text{Convg\_Ind} \gets \text{FALSE}$;
             \While {$ \text{Not Convg\_Ind}$}
                \State  $\text{Convg\_Ind} \gets \text{TRUE}$; $x_n' \gets$ zero point of $\hne_n(x_n)$;
                   \If {$|x_n - x_n'| > \text{Convg\_Thresh}$}
            		      \State $\text{Convg\_Ind} \gets \text{FALSE}$;
            	     \EndIf
                  \State $x_n\gets x_n'$; $n\gets \text{mod}(n,N)+1$;
              %   \State $(x_1, x_2,..., x_n) \gets (tmp_1, tmp_2,..., tmp_n)$
                
             \EndWhile
             % \State \Return $(x_1, x_2,..., x_n)$ 
            % \EndFunction  

            %\Function{Zero-Point}{i, $w_i$, D, E, X}
            % \EndFunction
        \end{algorithmic}  
    \end{algorithm} 
\end{comment}

\begin{algorithm}[!t]
\setstretch{1}
 \caption{Best Response Algorithm} \label{alg:best}
        \begin{algorithmic}
             \State $ x_n \gets \epsilon_n,n\in\N$; $n\gets 0$; $\text{Convg\_Set} \gets \emptyset$;
             \While {$|\text{Convg\_Set}|< N $}
                \State $n\gets \text{mod}(n,N)+1$; $x_n' \gets$ zero point of $\hne_n(x_n)$;
                   \If {$|x_n - x_n'| > \text{Convg\_Thresh}$}
            		      \State $x_n\gets x_n'$; $\text{Convg\_Set} \gets \emptyset$;
            	     \EndIf
                  \State  $\text{Convg\_Set} \gets \text{Convg\_Set}\cup \{n\}$; 
              %   \State $(x_1, x_2,..., x_n) \gets (tmp_1, tmp_2,..., tmp_n)$
                
             \EndWhile
             % \State \Return $(x_1, x_2,..., x_n)$ 
            % \EndFunction  

            %\Function{Zero-Point}{i, $w_i$, D, E, X}
            % \EndFunction
        \end{algorithmic}  
    \end{algorithm}

\section{Case Study with Different Noise Types}\label{sec:case}
We present case studies to validate our theoretical analysis and draw new insights. We consider $N=5$ clients, e.g.,  a cross-silo FL scenario where organizations cooperate for training.  The clients perform FedAvg algorithm over MNIST dataset using CNN. %MNIST contains handwritten digits with $6\times 10^4$ training data and $10^4$ test data. Each client has $d_n=6\times 10^4/5=1.2\times 10^4$ data samples.
Each client has $d_n=1.2\times 10^4$ data samples. We study two types of label noise \cite{gu2021instancedependent}:
\begin{itemize}
\item \textit{Random flipping}:  A client $n$'s label flips to an random incorrect label with probability $\epsilon_n$. It models the case where  data collection is subject to random errors.
\item \textit{Instance dependent}: Each data sample is relabeled based on the inference result of a pre-trained CNN model. This approach simulates the human annotation process, and the noise rates can be adjusted by changing the accuracy of the pre-trained CNN model. For example, if a CNN model has an accuracy $0.8$, then the error rate of a dataset whose labels are generated from the CNN model's inference results is around $0.2$.
\end{itemize}

\begin{figure}[t]
    \centering
    \includegraphics[height=4cm]{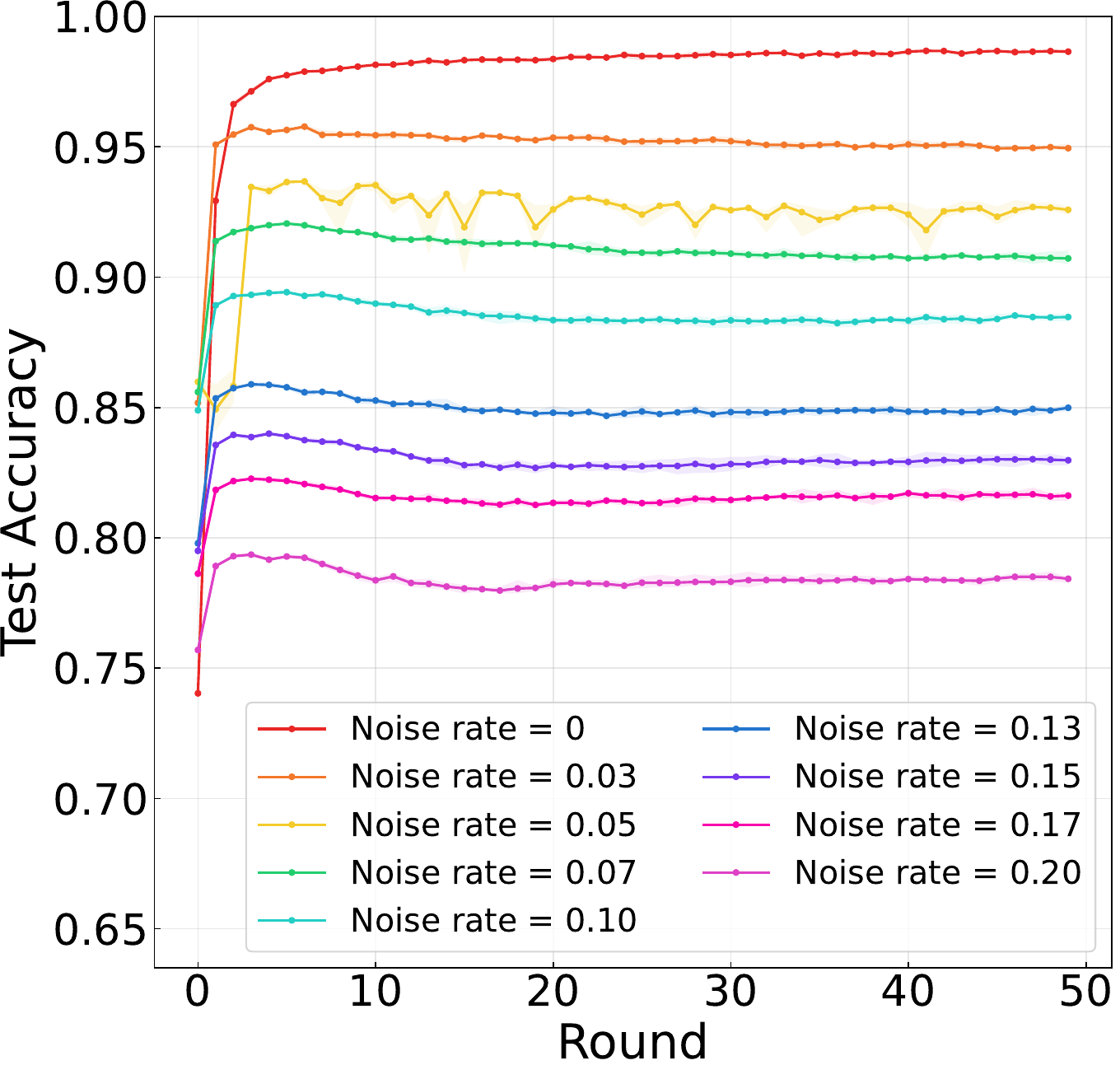}~\includegraphics[height=4cm]{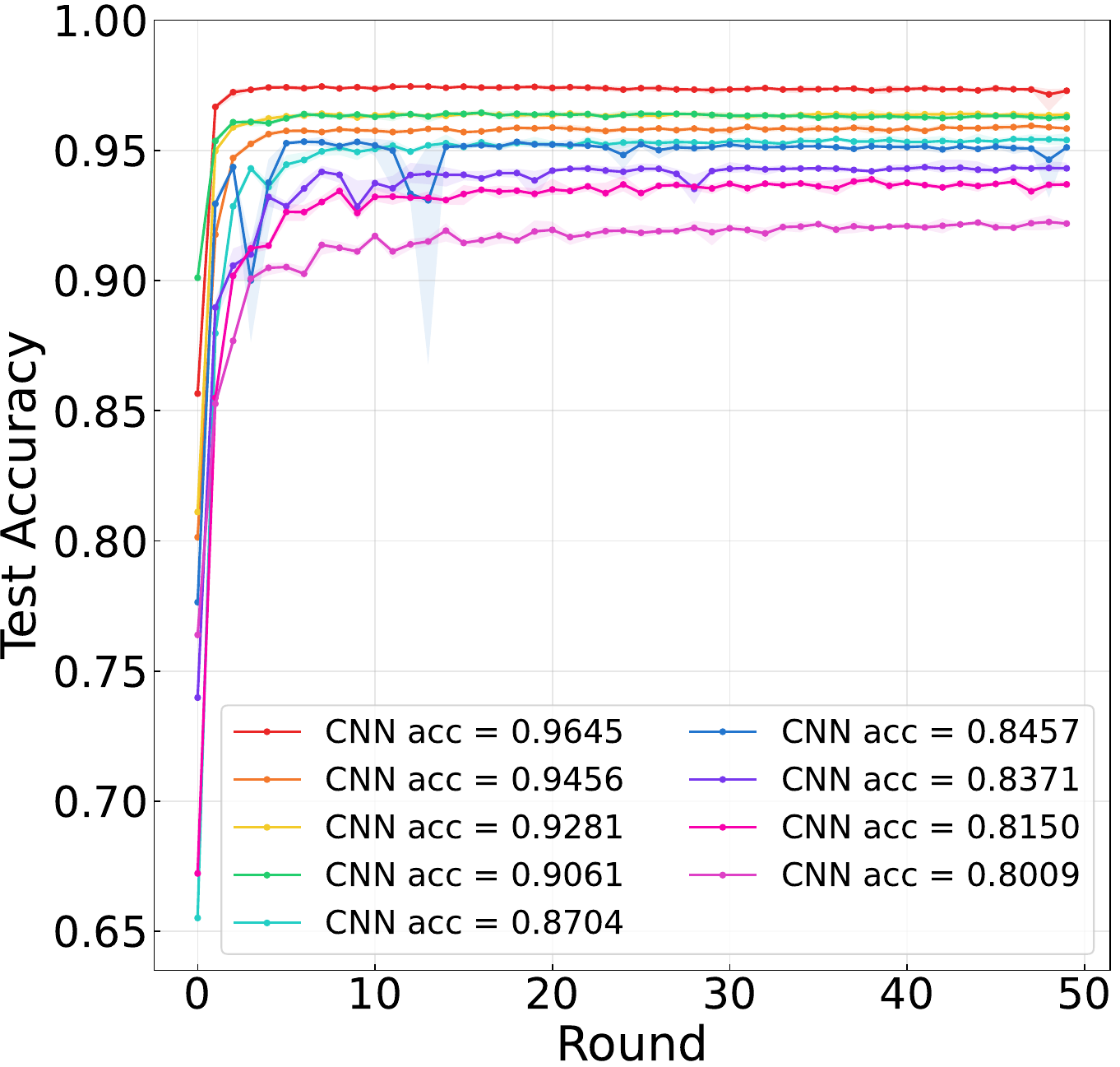}\\
    \qquad(a)\qquad\qquad\qquad\qquad\qquad\quad (b)\vspace{-2mm}
    \caption{Model accuracy under different noise rates: (a) random flipping; (b) instance dependency.}
    \label{fig:acc}
\end{figure}

\subsection{Impact of Label Noise on  FL}
We first study how label noise affects FL in terms of convergence and generalization. Fig. \ref{fig:acc} plots how the test accuracy of the global model changes with the training rounds (i.e., convergence). % 
We observe that for both random flipping and instance-dependent noises, the existence of label noise slows FL convergence. Moreover, under random flipping, a large noise rate (e.g., $0.2$) causes overfitting.  However, overfitting is not obvious under instance-dependent.

\begin{figure}[t]
    \centering
    \includegraphics[height=3.5cm]{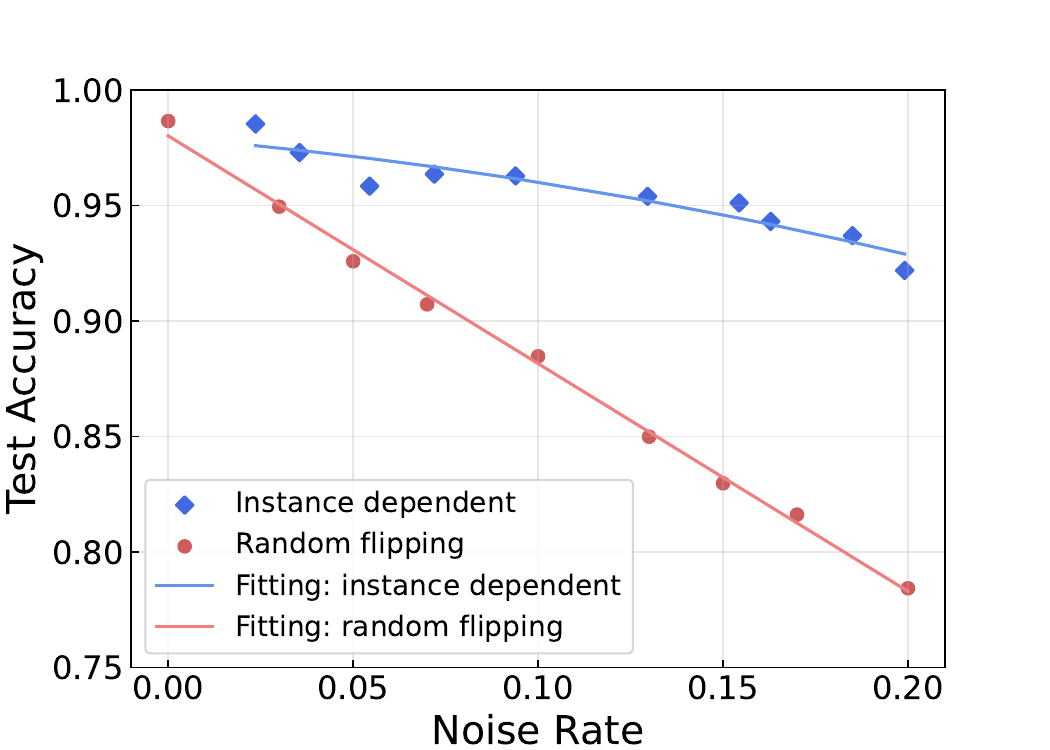}\vspace{-2mm}
    \caption{Model accuracy under random flipping and instance dependency.}
    \label{fig:error}
\end{figure}

Fig. \ref{fig:error} plots how the test accuracy changes with the noise rate at round $50$ (i.e., generalization). We observe that the global model performance decreases in the error rate. Interestingly, we observe that instance-dependent noise is less harmful to the model accuracy. Furthermore, we use curve fitting to simulate the impact of noise rate on global model accuracy and obtain the results below: % ({\color{red}I am thinking about for space limits, we simply say the fitted curves are linear and quadratic. We may leave the detailed discussiosn to future versions.  })
\begin{itemize}
\item \textit{Random flipping}: The accuracy degradation across noise rate
$\epsilon$ fits a linear function. In addition, Table \ref{table:unchange}
%\footnote{``Var." is short for ``variance", and ``Exp." is short for ``experimental".}
shows that given the same average noise rate, the heterogeneous noise rates of the clients do not have a significant impact on the accuracy. Based on the above observation, we adopt a linear formulation of $g(\bar{x}(\x))$ with respect to $\bar{x}(\x)$ for random flipping:
\begin{equation}
    g\left(\bar{x}(\x)\right) = \kappa \bar{x}(\x)+g_0,
\end{equation}
where $\kappa < 0$ and $g_0\in(0,1]$ correspond to the accuracy decreasing per unit of noise rate and the  model accuracy under no label error, respectively.
\item \textit{Instance dependent}: The accuracy degradation across noise rate $\epsilon$ approximately fits a quadratic function:
\begin{equation}
    g\left(\bar{x}(\x)\right) = g_2\left(\bar{x}(\x)\right)^2+g_1\bar{x}(\x)+g_0,
\end{equation}
where $g_2 < 0$, $g_1 < 0$ and $g_0\in(0,1]$.
\end{itemize}
Note that the two fitted functions validate our assumption on $g(\cdot)$, which is a concave and decreasing function. In what follows, we will use the fitted functions to evaluate the PoS.

\subsection{Impact of Label Noise on PoS}

To study the PoS, we use the following function to help model the denoising cost (see (\ref{cost-function})): 
\begin{equation}\label{eq:cost}
    f(x) = c\ln{x},
\end{equation}
where $c < 0$. Formulation \eqref{eq:cost} is determined based on an error correction method called FedCorr \cite{xu2022fedcorr}.\footnote{In each round of training, each client rectifies the data samples with incorrect labels using the recent training models. Assuming that with the model obtained in each training round, a fixed fraction of noisy data samples can be found, then the correction cost is given by \eqref{eq:cost}.}
Let $\phi(\mu,\sigma,a,b)$ denote a truncated normal distribution with mean $\mu$, variance $\sigma$, and truncated range $[a,b]$. We consider $c = -1$,\footnote{Note that the relative values of $c$ and $w_n$ (rather than their absolute values) affect simulation results. We choose a unit cost for simplicity.} $w_n \sim \phi( 10^6,10^4,5\times10^5,1.5\times 10^6)$, $\epsilon_n \in \{0, 0.02, \cdots, 0.2\}$ for $n\in\N$.   Each experiment is repeated for 100 runs,  and we show the averaged results. Algorithms \ref{alg:optimal} and \ref{alg:best}  are used to compute the socially optimal solution and NE, respectively.

\begin{table}[t]
    \centering
    \small
    \caption{Model accuracy under heterogeneous noise rates.}\label{table:unchange}
    \begin{tabular}{l|c|c}
    \hline 
    Noise Rates $(\epsilon_n,n\in\N)$ & Average & Variance \\
         & Accuracy & Across Runs \\
        \hline 
%         \multicolumn{2}{l}{Independent Random Flipping}&\\
        (0.1, 0.1, 0.1, 0.1, 0.1) & 0.888& 0.00044\\ 
        (0.05, 0.05, 0.1, 0.15, 0.15) & 0.889&0.00066\\ 
        (0.04, 0.08, 0.1, 0.12, 0.16) & 0.889&0.0005\\ 
    \hline 
    \end{tabular}
    \label{tab:my_label}
\end{table}
In Fig. \ref{fig:PoS}, we compare the  socially optimal solution (denoted by ``SWM") and NE in terms of  model accuracy and  the PoS in social welfare (i.e., the social welfare under SWM divided by that under NE). We observe that for both random flipping and instance-dependent noises, the model accuracy and social welfare under  ``SWM" are higher than that under NE.\footnote{In Figs.  \ref{fig:PoS}(c) and  \ref{fig:PoS}(d), a PoS that is larger than one implies a higher social welfare under SWM than that under NE.} This observation is consistent with Theorem \ref{them:price} and Lemma \ref{Lem: price}.  Importantly, in Fig. \ref{fig:PoS}(b), the PoS in social welfare  is generally larger under higher noise rate. That is, the non-cooperative behavior of clients imposes a higher efficiency loss when they have noiser data. This calls for an effective incentive mechanism to reduce the gap between socially optimal solutions and the equilibrium outcomes. Perhaps counter-intuitively, in Fig. \ref{fig:PoS}(d), the impact of unit revenue $w$ on the PoS relies on  label noise. Under instance dependent, a larger unit revenue leads to a higher necessity for motivating cooperative behaviors among clients.

\begin{figure}[t]
    \centering
    \includegraphics[height=3.5cm]{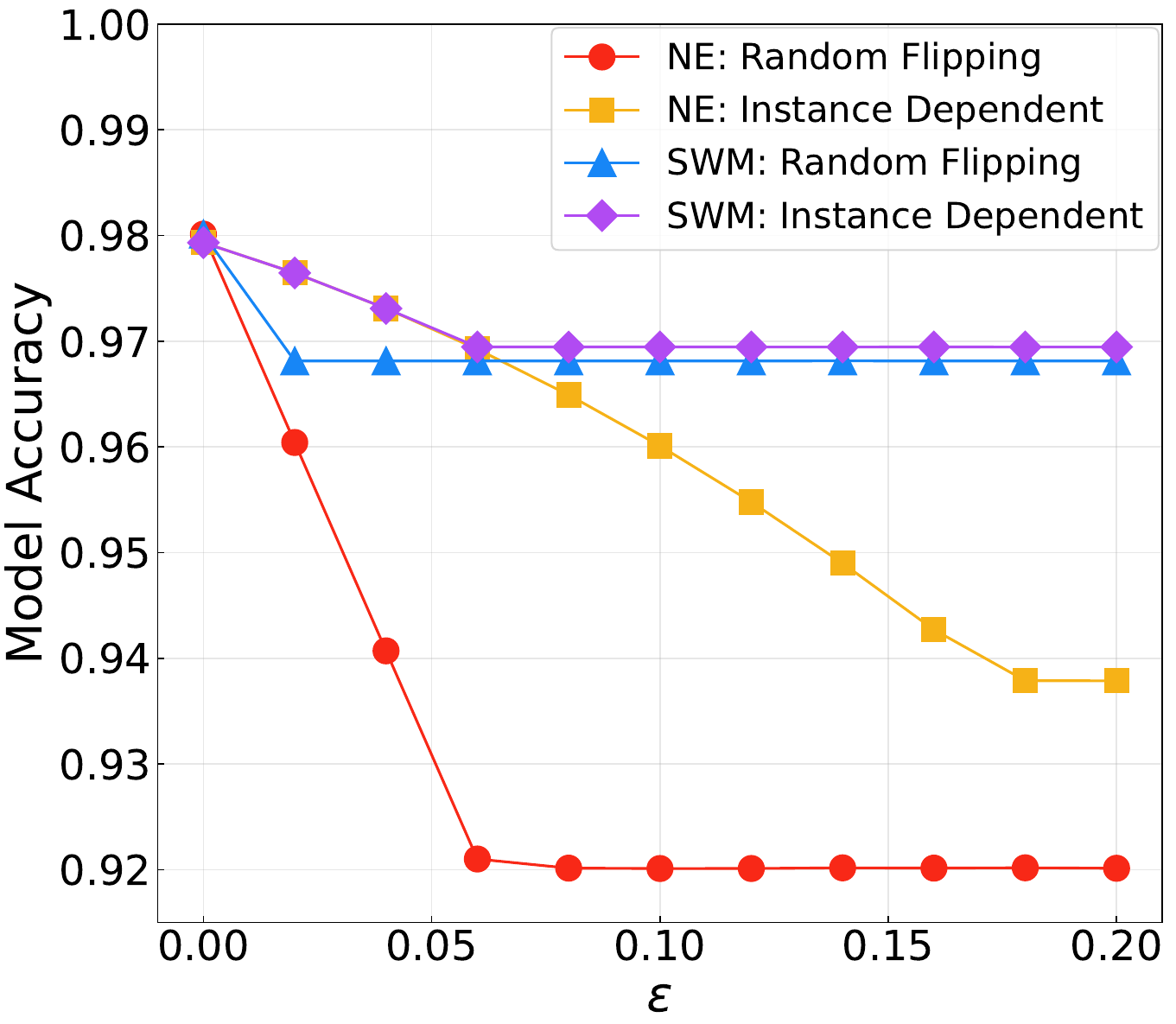}~\includegraphics[height=3.5cm]{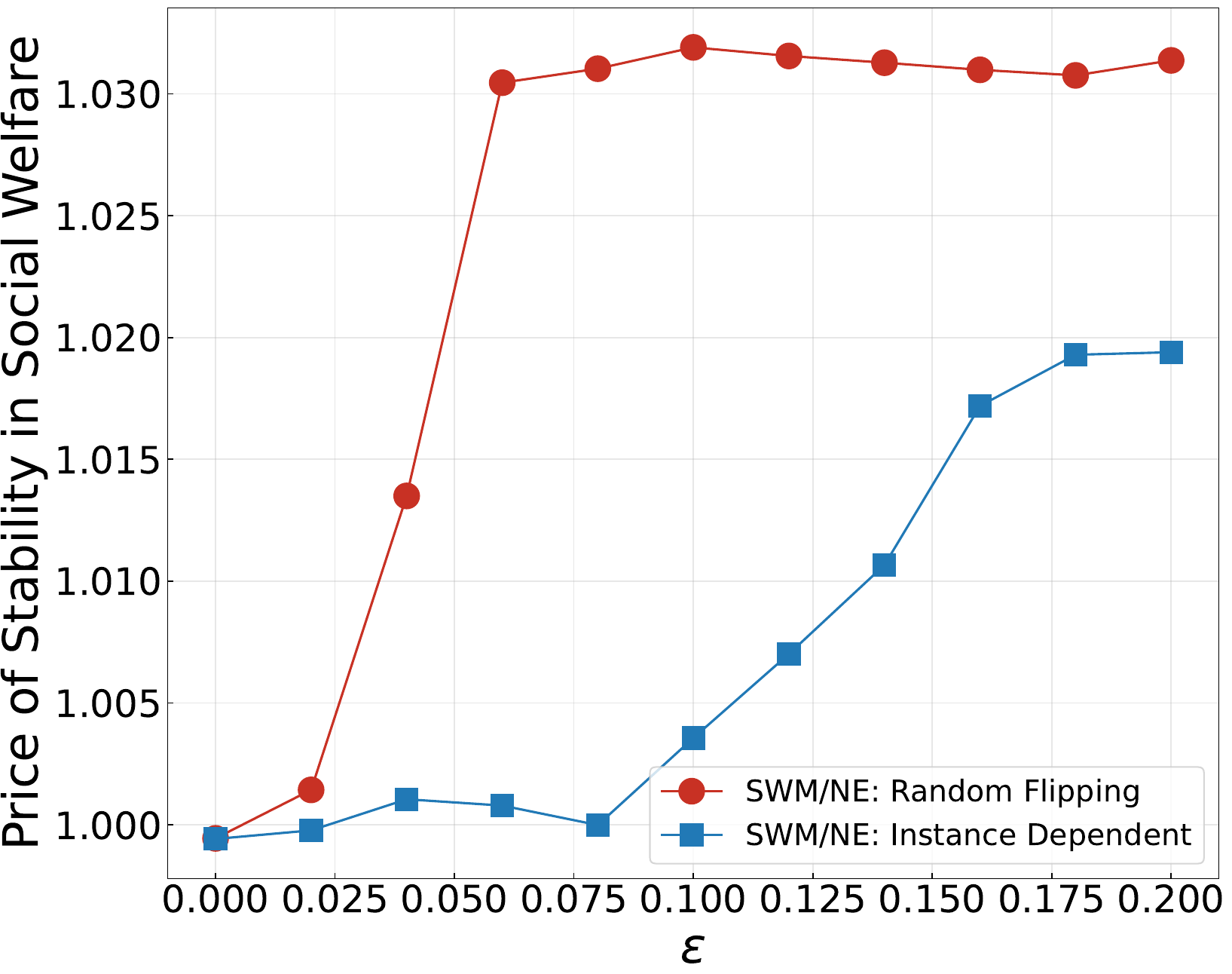}\vspace{-1mm}\\
    ~~~~(a)\qquad\qquad\qquad\qquad\qquad\quad (b)\\
     \includegraphics[height=3.5cm]{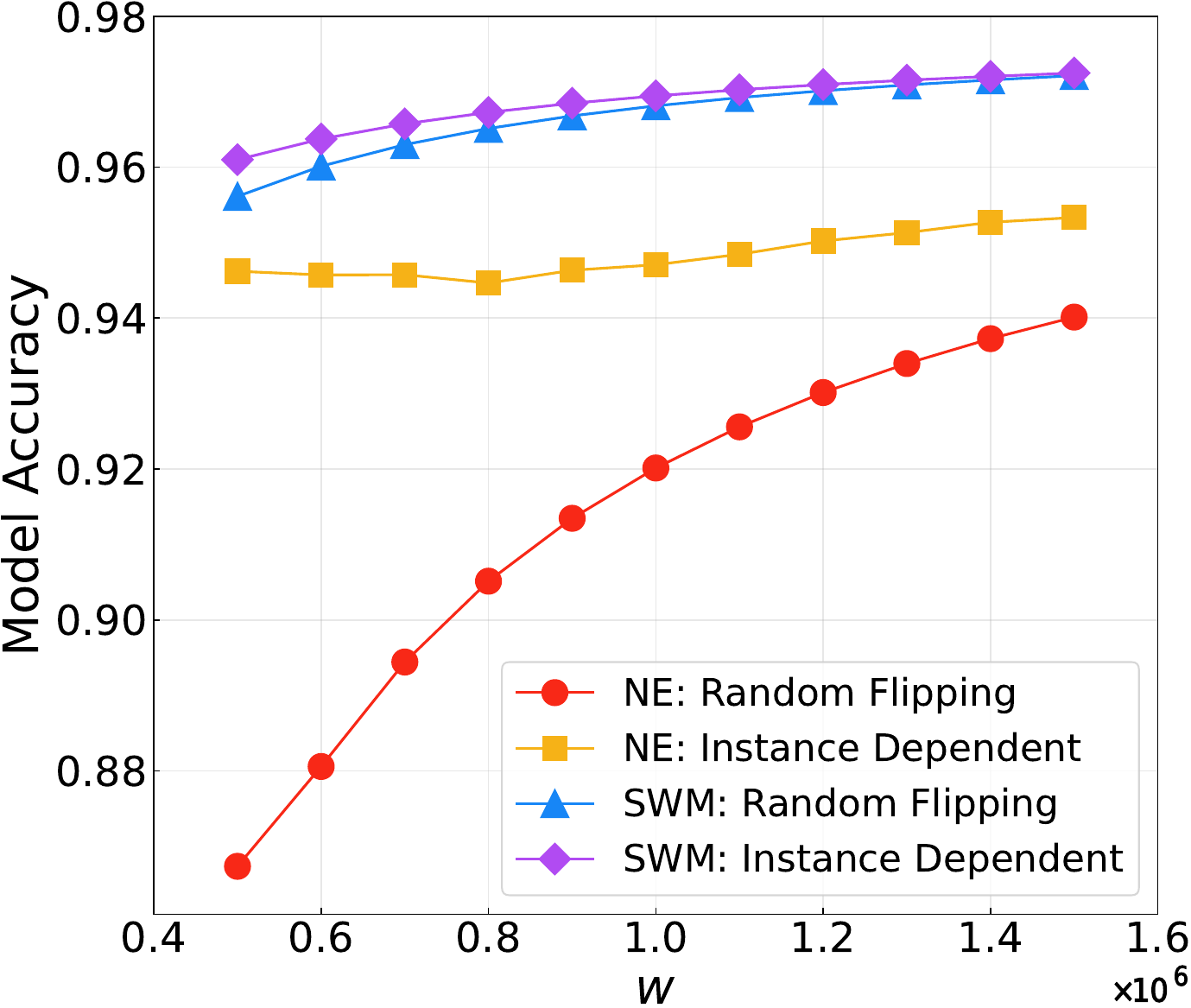}~\includegraphics[height=3.5cm]{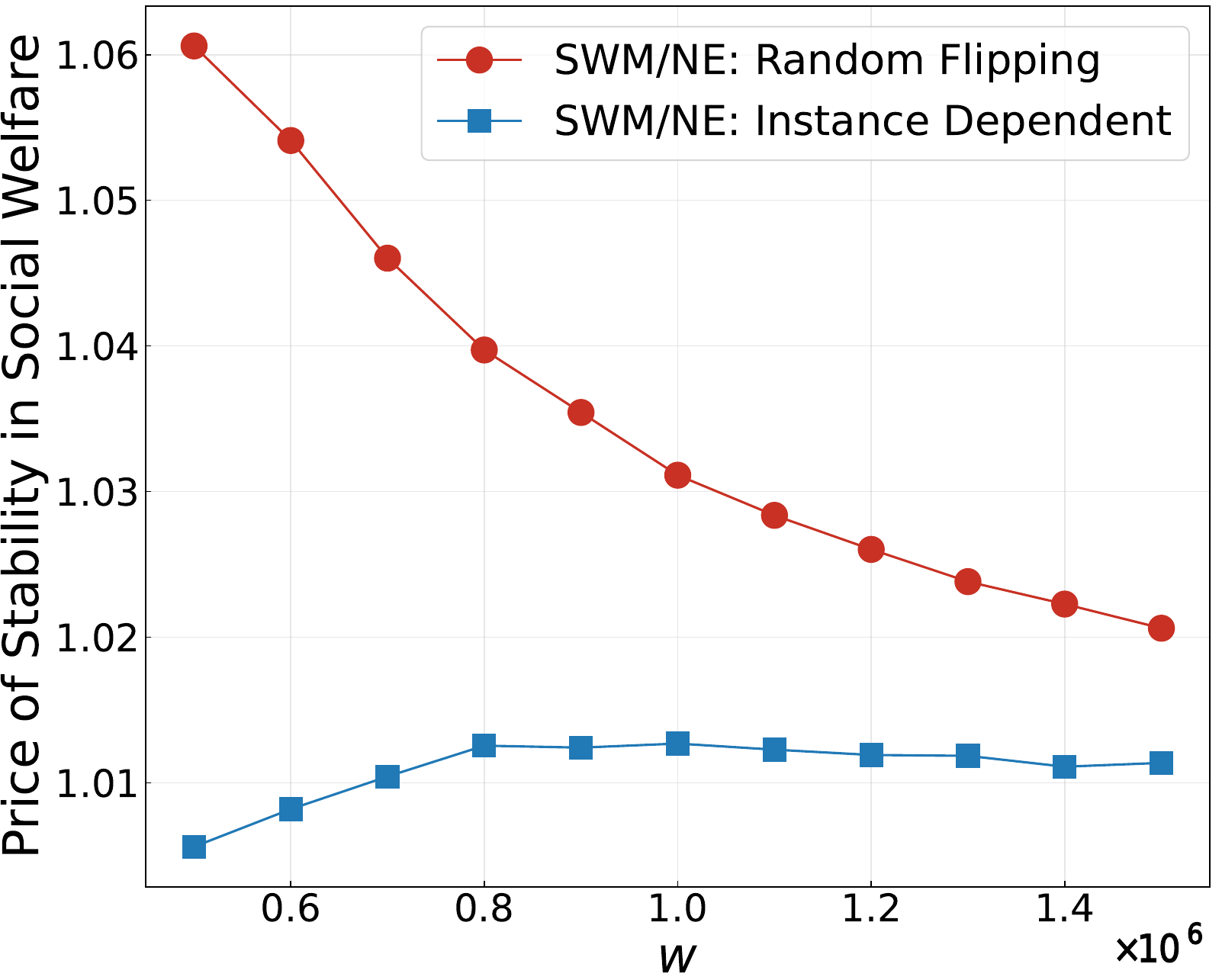}\vspace{-1mm}\\
    ~~~~(c)\qquad\qquad\qquad\qquad\qquad\quad (d)\vspace{-2mm}
    \caption{(a) Model accuracy and (b) PoS under different values of $\epsilon$ with $\epsilon_n=\epsilon$ for $n\in\mathcal{N}$; (c) model accuracy and (d) PoS under different values of $w$ with $w_n=w$ for $n\in\mathcal{N}$.}
    \label{fig:PoS}
\end{figure}

\section{Conclusion}
This paper presents a first study on the PoS in quality-aware federated learning. We formulate the clients' interactions as a novel label denoising game and characterize its equilibrium. We further derive the socially optimal solution, and find that the clients' non-cooperative behavior at NE always lead to a worse global model. Numerical results show that as the clients' data become noisier, the efficiency loss increases. This motivates the future work for an effective incentive mechanism as a remedy.

\bibliographystyle{IEEEtran}
\bibliography{Main-Chao}

%\newpage 
%~~
%\newpage
%~~
%\newpage

%\input{supplementary}
\end{document}